%% file: main.tex
\documentclass{article}

\usepackage{microtype}
\usepackage{graphicx}
\usepackage{subcaption}
\usepackage{booktabs} 
\usepackage{hyperref}

\usepackage[preprint]{icml2026}

\usepackage{amsmath}
\usepackage{amssymb}
\usepackage{mathtools}
\usepackage{amsthm}
\usepackage{multirow}

\usepackage[capitalize,noabbrev]{cleveref}

\theoremstyle{plain}
\newtheorem{theorem}{Theorem}[section]

\newtheorem{lemma}[theorem]{Lemma}

\theoremstyle{definition}

\theoremstyle{remark}

\usepackage[textsize=tiny]{todonotes}
\newcommand{\draft}[1]{}

\icmltitlerunning{Multi-Head Attention as a Source of Catastrophic Forgetting in MoE Transformers}

\begin{document}

\twocolumn[
  \icmltitle{Multi-Head Attention as a Source of Catastrophic Forgetting \\in MoE Transformers}



  \icmlsetsymbol{equal}{*}

  \begin{icmlauthorlist}
    \icmlauthor{Anrui Chen}{fdu}
    \icmlauthor{Ruijun Huang}{fdu}
    \icmlauthor{Xin Zhang}{fdu}
    \icmlauthor{Fang Dong}{fdu}
    \icmlauthor{Hengjie Cao}{fdu}
    \icmlauthor{Zhendong Huang}{fdu}
    \icmlauthor{Yifeng Yang}{fdu}
    \icmlauthor{Mengyi Chen}{fdu}
    \icmlauthor{Jixian Zhou}{fdu}
    \icmlauthor{Mingzhi Dong}{bath}
    \icmlauthor{Yujiang Wang}{oxford}
    \icmlauthor{Jinlong Hou}{sii}
    \icmlauthor{Qin Lv}{boulder}
    \icmlauthor{Robert P. Dick}{michigan}
    \icmlauthor{Yuan Cheng}{sii}
    \icmlauthor{Tun Lu}{fdu}
    \icmlauthor{Fan Yang}{fdu}
    \icmlauthor{Li Shang}{fdu}
  \end{icmlauthorlist}

  \icmlaffiliation{fdu}{Fudan University, Shanghai, China}
  \icmlaffiliation{bath}{University of Bath, Bath, United Kingdom}
  \icmlaffiliation{oxford}{Oxford Suzhou Centre for Advanced Research, Suzhou, China}
  \icmlaffiliation{sii}{Shanghai Innovation Institute, Shanghai, China}
  \icmlaffiliation{boulder}{Department of Computer Science, University of Colorado Boulder, Colorado, USA}
  \icmlaffiliation{michigan}{Department of Electrical Engineering and Computer Science, University of Michigan}

  \icmlcorrespondingauthor{Li Shang}{lishang@fudan.edu.cn}

  \icmlkeywords{Machine Learning, ICML}

  \vskip 0.3in
]



\printAffiliationsAndNotice{}  

\begin{abstract}
Mixture-of-Experts (MoE) architectures are often considered a natural fit for continual learning because sparse routing should localize updates and reduce interference, yet MoE Transformers still forget substantially even with sparse, well-balanced expert utilization. We attribute this gap to a pre-routing bottleneck: multi-head attention concatenates head-specific signals into a single post-attention router input, forcing routing to act on co-occurring feature compositions rather than separable head channels. We show that this router input simultaneously encodes multiple separately decodable semantic and structural factors with uneven head support, and that different feature compositions induce weakly aligned parameter-gradient directions; as a result, routing maps many distinct compositions to the same route. We quantify this collision effect via a route-wise effective composition number \(N_{\mathrm{eff}}\) and find that higher \(N_{\mathrm{eff}}\) is associated with larger old-task loss increases after continual training. Motivated by these findings, we propose \textsc{MH-MoE}, which performs head-wise routing over sub-representations to increase routing granularity and reduce composition collisions. On TRACE with Qwen3-0.6B/8B, \textsc{MH-MoE} effectively mitigates forgetting, reducing \(-\mathrm{BWT}\) on Qwen3-0.6B from \(11.2\%\) (\textsc{LoRAMoE}) to \(4.5\%\).
\end{abstract}

\input{section/intro}

\input{section/analysis}
\input{section/method}
\input{section/related_work}

\input{section/experiment}


\input{main.bbl}

\newpage
\appendix
\onecolumn
\input{section/appendix}



\end{document}

%% file: section/intro.tex
\section{Introduction}
\label{sec:intro}

Mixture-of-Experts (MoE) architectures \cite{jacobs1991adaptive,jordan1994hierarchical} are appealing for continual and multi-task learning because routing can localize updates to a subset of experts, potentially reducing gradient interference and thereby mitigating catastrophic forgetting \cite{chen2023lifelong,kang2025self,li2024moe,li2024theory,dou2024loramoe}. In principle, expert specialization should preserve previously learned knowledge by isolating task-specific parameter changes \cite{ramasesh2020anatomy,davari2022probing,goodfellow2013empirical}.

Despite this promise, our empirical results show that MoE Transformers continue to suffer substantial catastrophic forgetting, even when expert utilization is sparse and well-balanced. On TRACE, the MoE baseline attains markedly negative backward transfer (BWT $=-11.2\%$ on Qwen3-0.6B; Table~\ref{tab:trace_final_per_task}), reflecting systematic degradation on earlier tasks as later tasks are learned. Expert modularity alone is therefore insufficient to prevent interference. A fundamental question follows: if experts are intended to isolate knowledge, where does task interference actually arise in MoE Transformers?

In this work, we argue that catastrophic forgetting in MoE Transformers is primarily caused by a structural bottleneck introduced before expert routing: multi-head attention. In standard Transformer architectures, outputs from multiple representation heads—each encoding distinct and heterogeneous features—are concatenated into a single vector prior to routing. This design implicitly assumes that the concatenated representation forms a coherent feature space suitable for expert selection.

Our analysis shows that this assumption is generally violated: multi-head attention produces a post-attention router input in which multiple feature signals co-occur in a single vector, while their support is unevenly distributed across heads. As a result, MoE routing must make expert decisions based on feature co-occurrence, which is prone to composition collisions and interference.
Concretely, we establish three findings:

\textbf{Multi-head attention mixes head-structured feature signals.}
The post-attention router input jointly encodes multiple separately decodable semantic and structural features (Fig.~\ref{fig:probe_and_overlap}), while their support is highly non-uniform across representation heads (Fig.~\ref{fig:head_feature_align}), showing that head-specific feature channels are aggregated into a single vector before routing.

\textbf{Feature compositions induce diverse learning signals.}
Different feature compositions produce composition-conditioned parameter-gradient directions with low cosine agreement (Fig.~\ref{fig:grad_alignment}), implying that a single shared update direction cannot align well with many compositions simultaneously.

\textbf{Composition collisions under MoE routing amplify forgetting.}
Because MoE routing compresses the multiplexed router input into a single expert decision, many distinct feature compositions collide on the same route (high route-wise effective composition number \(N_{\mathrm{eff}}\); Fig.~\ref{fig:layer_neff}). Routes with higher \(N_{\mathrm{eff}}\) exhibit larger old-task loss increases after continual training, linking composition mixing to catastrophic forgetting (Fig.~\ref{fig:neff_binplot}).

Motivated by these findings, we propose \textsc{MH-MoE}, which performs routing \emph{independently across multiple representation heads} rather than making a single expert decision from a head-mixed router input. This head-wise routing yields two practical benefits:

\textit{Mitigates forgetting with minimal accuracy loss.} Head-wise routing reduces feature-composition collisions within each update destination (lower route-wise \(N_{\mathrm{eff}}\)).
Because different compositions induce weakly aligned learning signals, this reduces gradient conflict and improves retention.

\textit{Task-agnostic and streamable.} \textsc{MH-MoE} does not rely on task boundaries, task IDs, or replay buffers, and uses the same token-level routing mechanism during training and inference, making it naturally compatible with continuous streams where task switches are unknown.

We evaluate \textsc{MH-MoE} on TRACE (8 tasks) with pretrained Qwen3-0.6B/8B backbones, comparing against a standard MoE baseline (\textsc{LoRAMoE}). 
\textsc{MH-MoE} consistently improves the retention--accuracy tradeoff: on Qwen3-0.6B, OP increases from 0.378 to 0.467 and BWT improves from $-0.112$ to $-0.045$; on Qwen3-8B, OP increases from 0.551 to 0.569 and BWT improves from $-0.055$ to $-0.051$.

%% file: section/analysis.tex
\section{Analysis}
\label{sec:analysis}

In this section, we explain why MoE Transformers can still suffer substantial catastrophic forgetting. All analyses use Qwen3-0.6B (and its MoE variant) on C-STANCE and FOMC datasets.

\subsection{Post-Attention Router Inputs Are Head-Mixed and Multi-feature}
\label{sec:attn_output_multifeature}

This subsection asks whether the router input provides separable signals that routing can exploit, or instead mixes multiple factors so that routing must act on their co-occurrence. We answer this with two analyses:
(i) we test which semantic/structural variables are linearly decodable from the post-attention representation and whether their probe-induced subspaces overlap;
(ii) we quantify whether these signals are supported unevenly across representation heads.

\textbf{Features correspond to linearly decodable signals in the representation.}
We define a \emph{feature} $Y$ as a discrete variable whose value is linearly decodable from the post-attention router input $h_t^{(\ell)}\in\mathbb{R}^d$. For each feature $Y$ and layer $\ell$, we train a multinomial linear probe on frozen representations:
\begin{equation}
\hat{p}_\ell(y\mid h)
=\mathrm{Softmax}\!\left(W_Y^{(\ell)} h + b_Y^{(\ell)}\right).
\label{eq:probe_attn_out}
\end{equation}
The probe induces a feature-specific decoding geometry via the \emph{decoding subspace}
\begin{equation}
\mathcal{S}_{Y}^{(\ell)} \;=\; \mathrm{span}\!\left( (W_Y^{(\ell)})^\top \right)\subseteq\mathbb{R}^d,
\label{eq:feature_subspace}
\end{equation}
which captures the linear directions in \(h_t^{(\ell)}\) predictive of \(Y\).
We instantiate \(Y\) using salient semantic and structural variables: domain identity, stance label, token-frequency bucket, and relative-position bucket.

\textbf{Multiple features are mixed in a single router input.}

All studied variables are predicted substantially above chance across layers (Fig.~\ref{fig:probe-across-layers}),
showing that the same router input \(h_t^{(\ell)}\) simultaneously carries semantic (domain/stance) and structural (frequency/position) information.
Moreover, probe-induced decoding subspaces have small pairwise overlap (Fig.~\ref{fig:f_overlap}),
suggesting these signals rely on largely distinct linear directions rather than a shared low-dimensional factor.
Together, \(h_t^{(\ell)}\) is \emph{multiplexed}: multiple separable signals co-exist in one vector, so a single routing score computed from \(h_t^{(\ell)}\) must implicitly trade off among them when they co-occur.

\begin{figure}[ht]
  \centering

  \begin{subfigure}[t]{0.48\linewidth}
    \centering
    \includegraphics[width=\linewidth]{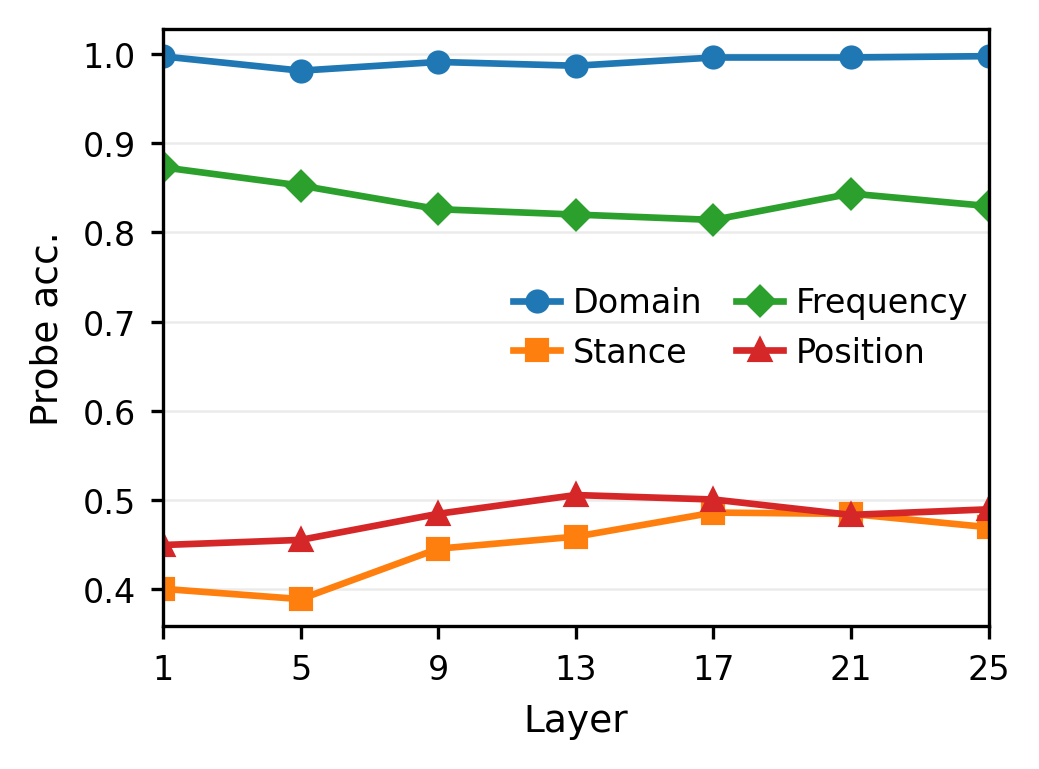}
    \caption{}
    \label{fig:probe-across-layers}
  \end{subfigure}
  \hspace{0.01\linewidth}
  \begin{subfigure}[t]{0.48\linewidth}
    \centering
    \includegraphics[width=\linewidth]{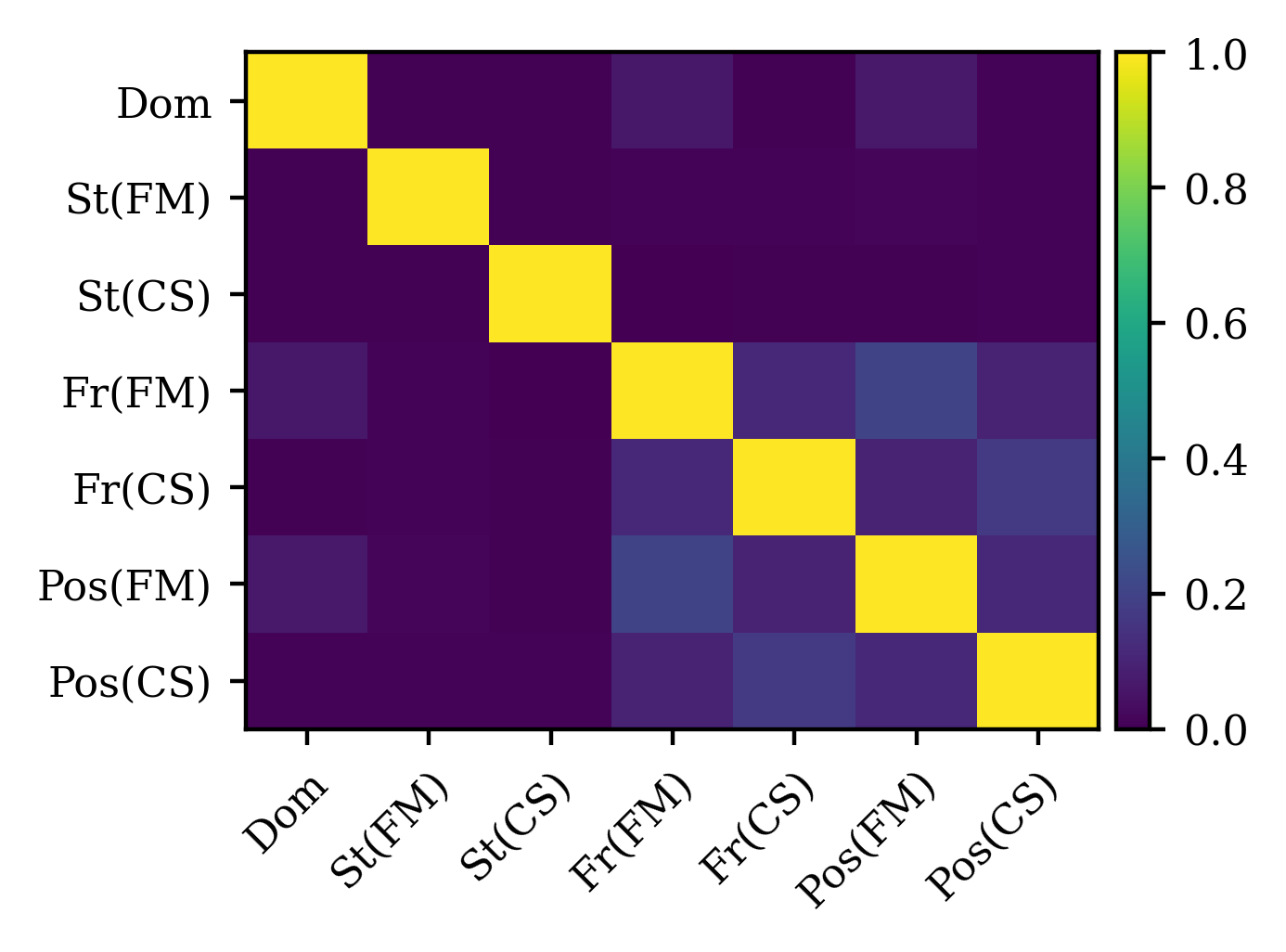}
    \caption{}
    \label{fig:f_overlap}
  \end{subfigure}

  \caption{\textbf{The router input multiplexes multiple decodable features.}
(a) Linear probes trained on post-attention states \(h_t^{(\ell)}\) predict domain/stance (semantic) and frequency/position (structural) well above chance across layers, showing that these signals co-exist in the same vector.
(b) Overlap between probe-induced decoding subspaces is small, indicating that the co-existing signals occupy largely distinct linear directions within \(h_t^{(\ell)}\).}

  \label{fig:probe_and_overlap}
\end{figure}

\textbf{Features are head-structured.}
The multiplexing in \(h_t^{(\ell)}\) is not uniform across representation heads.
For each feature \(Y\), we quantify head-wise \emph{ablation-based importance}: we remove one head at the ablation site and measure how much the feature’s probe performance degrades on the router input (Appendix~\ref{app:head_feature_metric}).
Normalizing these importance scores across heads yields a per-feature distribution over heads (\emph{shares}) that summarizes where the decodable signal is concentrated.
We observe highly non-uniform head--feature patterns (Fig.~\ref{fig:head_feature_align}): for each feature, a small subset of heads accounts for a disproportionate share of importance, and the dominant head subsets differ across features.

\begin{figure}[h]
  \centering
  \includegraphics[width=0.9\linewidth]{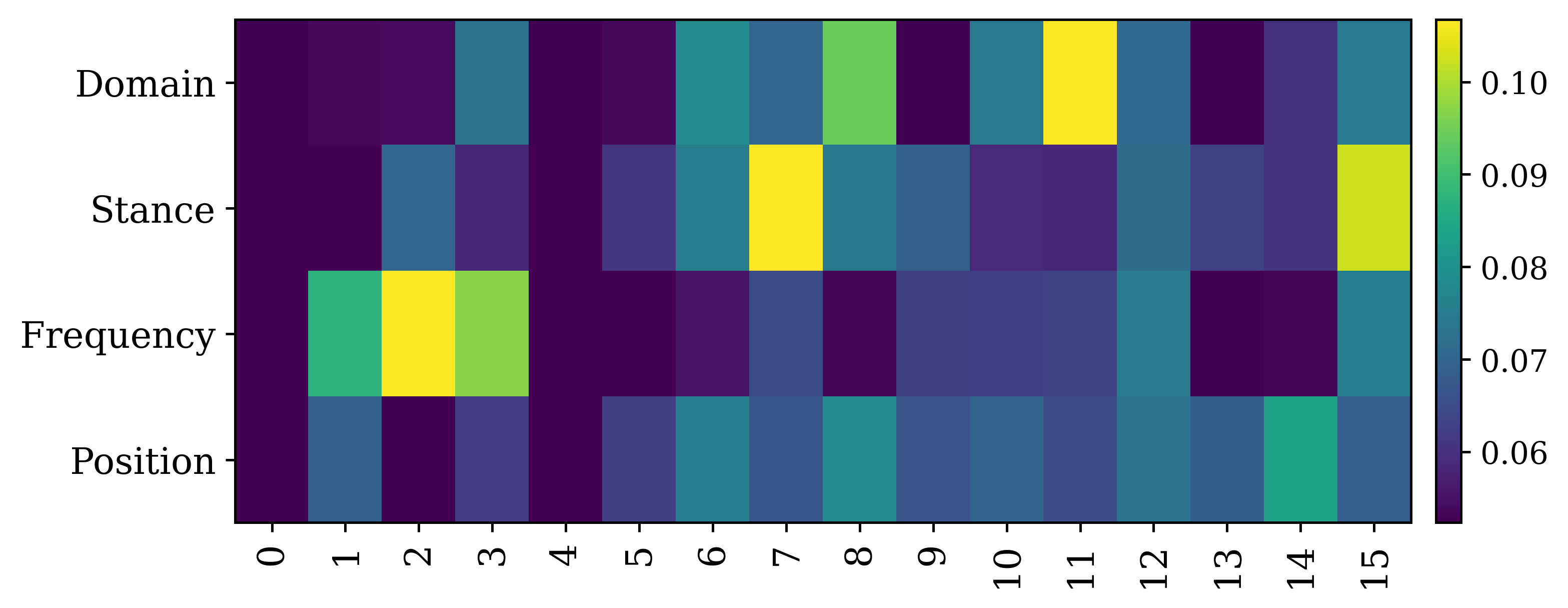}
  \caption{\textbf{Feature signals are head-structured but appear mixed in the router input.}
For each feature \(Y\), we estimate head-wise causal importance by ablating one head and measuring the resulting drop in probe accuracy on \(h_t^{(\ell)}\).
Importance is highly non-uniform across heads and differs by feature, suggesting that feature signals originate in specific heads but are multiplexed in the post-attention router input.}

  \label{fig:head_feature_align}
\end{figure}

These results show that the standard router input $h_t^{(\ell)}$ is both \emph{multiplexed} and \emph{head-structured}: multiple separable feature signals are present, but their support is uneven across heads and then aggregated into a single vector. Standard MoE routing compresses $h_t^{(\ell)}$ into a \emph{single} expert decision, so it cannot preserve head-wise separation when multiple signals co-occur. Consequently, tokens with different feature compositions can map to the same route, forcing parameter sharing across heterogeneous learning signals.

\subsection{Feature Compositions Induce Distinct Learning Signals}
\label{sec:comp_grad}

In this subsection, we ask whether tokens with different feature compositions induce different parameter-update directions. We analyze this by comparing composition-conditioned parameter-gradient directions.

\textbf{Feature compositions.}
Let \(\mathcal{Y}_1,\ldots,\mathcal{Y}_m\) be the label spaces for \(m\) features decodable from \(h_t^{(\ell)}\).
We define the \emph{feature composition} of a token representation \(h_t^{(\ell)}(x)\) as the tuple
\begin{equation}
c\!\left(h_t^{(\ell)}(x)\right) = (y_1,\ldots,y_m), 
\qquad y_i \in \mathcal{Y}_i .
\label{eq:composition}
\end{equation}
Although the full product \(\mathcal{Y}_1\times\cdots\times\mathcal{Y}_m\) can be large, we work with the empirical subset observed in data.
In our experiments, \((y_1,\ldots,y_m)\) is instantiated using ground-truth labels when available (domain/stance) and bucketed statistics (frequency/position).

\textbf{Gradients as learning signals.}
Let \(\ell_{x,t}(\theta)=-\log p_\theta(x_{t+1}\mid x_{\le t})\) denote the token-level next-token loss.
For a parameter block \(\theta^{(\ell)}\) at layer \(\ell\) that is updated during continual training, we define the token-level parameter-gradient
\begin{equation}
g^{(\ell)}_{x,t}
=
\nabla_{\theta^{(\ell)}} \, \ell_{x,t}(\theta)
\in \mathbb{R}^{|\theta^{(\ell)}|}.
\label{eq:grad_on_theta}
\end{equation}
These gradients determine the update direction; comparing their \emph{directions} reveals whether different compositions push parameters in similar or different ways.

\textbf{Composition-conditioned mean directions.}
For a composition \(c\), let
\[
\mathcal{S}_c^{(\ell)}=\{(x,t)\,:\,c(h^{(\ell)}_t(x))=c\}
\]
be the set of tokens with composition \(c\) at layer \(\ell\).
We aggregate per-token gradients into a composition-conditioned mean \emph{direction}:
\begin{equation}
\bar{g}^{(\ell)}(c)
=
\frac{1}{|\mathcal{S}_c^{(\ell)}|}
\sum_{(x,t)\in\mathcal{S}_c^{(\ell)}}
\frac{g^{(\ell)}_{x,t}}{\|g^{(\ell)}_{x,t}\|_2+\varepsilon},
\label{eq:mean_dir}
\end{equation}
where \(\varepsilon\) is a small constant for numerical stability.
Normalizing each token gradient focuses this analysis on directional agreement (interference/alignment) rather than magnitude.
We compare compositions via cosine similarity
\begin{equation}
\mathrm{Sim}^{(\ell)}(c_1,c_2)
=
\cos\!\Big(\bar{g}^{(\ell)}(c_1),\bar{g}^{(\ell)}(c_2)\Big).
\label{eq:comp_cos}
\end{equation}

\textbf{Within-composition coherence vs.\ cross-composition weak alignment.}
We evaluate two notions of gradient-direction agreement:
(i) \textbf{within-composition coherence}: for each composition \(c\), randomly partition \(\mathcal{S}_c^{(\ell)}\) into two disjoint subsets \(A\) and \(B\), compute \(\bar g_A^{(\ell)}(c)\) and \(\bar g_B^{(\ell)}(c)\) via Eq.~\eqref{eq:mean_dir}, and measure \(\cos(\bar g_A^{(\ell)}(c),\bar g_B^{(\ell)}(c))\);
(ii) \textbf{cross-composition agreement}: sample distinct compositions \(c_1\neq c_2\) and compute \(\mathrm{Sim}^{(\ell)}(c_1,c_2)\).
Fig.~\ref{fig:grad_alignment} shows that within-composition directions are consistently aligned, while cross-composition similarities concentrate near zero.
Thus, different feature compositions induce \emph{diverse} learning signals: a single shared update direction cannot simultaneously align well with many compositions.
This observation motivates why \emph{composition mixing} within a single MoE route can be harmful.

\begin{figure}[h]
  \centering
  \includegraphics[width=0.9\linewidth]{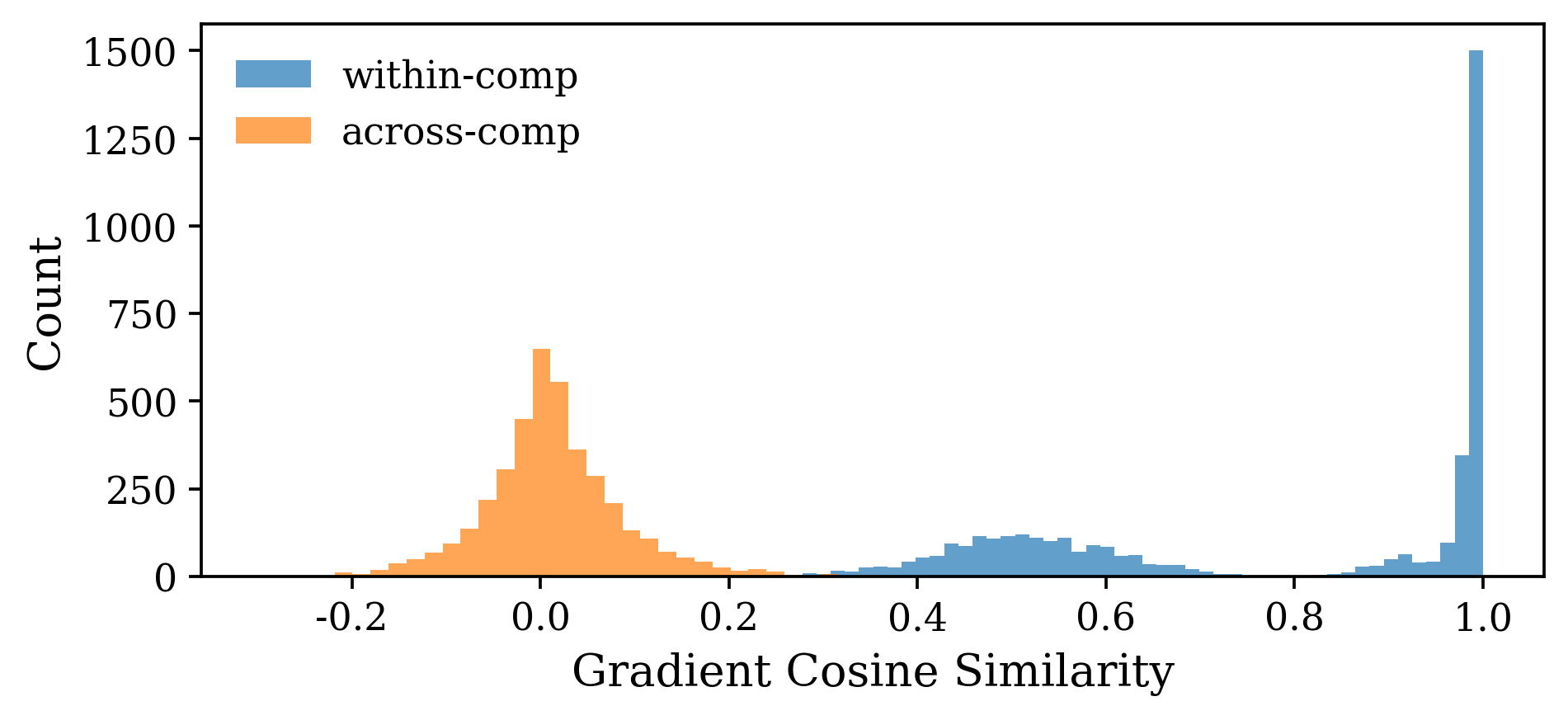}
  \caption{\textbf{Different feature compositions induce distinct gradient directions.}
Histogram of cosine similarity between composition-conditioned mean gradient directions (Eq.~\eqref{eq:mean_dir}--\eqref{eq:comp_cos}).
Splits of the same composition show high agreement, whereas different compositions concentrate near zero similarity, indicating weak alignment between their learning signals.}
  \label{fig:grad_alignment}
\end{figure}

These results establish that feature compositions correspond to stable, composition-specific update directions: gradients are coherent within a composition but weakly aligned across compositions. Consequently, when multiple compositions share parameters, their updates are more likely to interfere. This motivates the next subsection, where we quantify composition mixing within routes and connect it to route-level forgetting.

\subsection{From Composition Mixing to Forgetting in MoE Routing}
\label{sec:forgetting}

This subsection asks whether \emph{composition mixing} induced by standard MoE routing predicts catastrophic forgetting. We answer this in three steps: (i) define route-wise composition mixing under the old-task token distribution; (ii) define route-conditioned old-task loss and route-wise forgetting, and derive a theoretical link between higher mixing and greater susceptibility; (iii) empirically test how forgetting varies with mixing across routes while controlling for old-task exposure.

\textbf{Route assignment.}
For clarity, we present the analysis under top-1 routing; top-$k$ follows analogously.
Consider an MoE layer with \(K\) experts.
Given router input \(h^{(\ell)}_t(x)\in\mathbb{R}^d\), the router produces logits \(a_t^{(\ell)}(x)\in\mathbb{R}^K\) and selects
\begin{equation}
r_t^{(\ell)}(x) = \arg\max_{k\in[K]} a^{(\ell)}_{t,k}(x),
\label{eq:route_def}
\end{equation}
which we call the \emph{route} for token \((x,t)\) at layer \(\ell\).

\textbf{Route-wise composition mixing under old-task tokens.}
Let \(c(h_t^{(\ell)}(x))\) denote the feature composition (Eq.~\eqref{eq:composition}).
We define the distribution of compositions conditioned on route \(r\) \emph{under the old-task token distribution}:
\begin{equation}
p^{(\ell)}(c \mid r)
=
\Pr_{(x,t)\sim \mathcal{D}_{\mathrm{old}}}\!\Big[c(h_t^{(\ell)}(x))=c \,\Big|\, r_t^{(\ell)}(x)=r\Big].
\label{eq:pc_given_r}
\end{equation}
If routing were composition-selective, \(p^{(\ell)}(c\mid r)\) would concentrate on a small number of compositions.
A broad \(p^{(\ell)}(c\mid r)\) indicates \emph{composition mixing} within the route.
We focus on \(\mathcal{D}_{\mathrm{old}}\) because forgetting is evaluated on old-task tokens: higher mixing implies that a larger and more diverse fraction of old-task compositions share a route whose parameters will later be updated by new-task training, increasing the chance of interference.

\textbf{Effective composition number.}
We quantify mixing using the effective number of compositions:
\begin{equation}
N_{\mathrm{eff}}^{(\ell)}(r)
=
\frac{1}{\sum_{c} \left(p^{(\ell)}(c \mid r)\right)^2 }.
\label{eq:neff}
\end{equation}
This equals \(1\) if all tokens routed to \(r\) share the same composition, and increases as \(p^{(\ell)}(c\mid r)\) spreads.

\begin{figure}[h]
  \centering
  \includegraphics[width=0.9\linewidth]{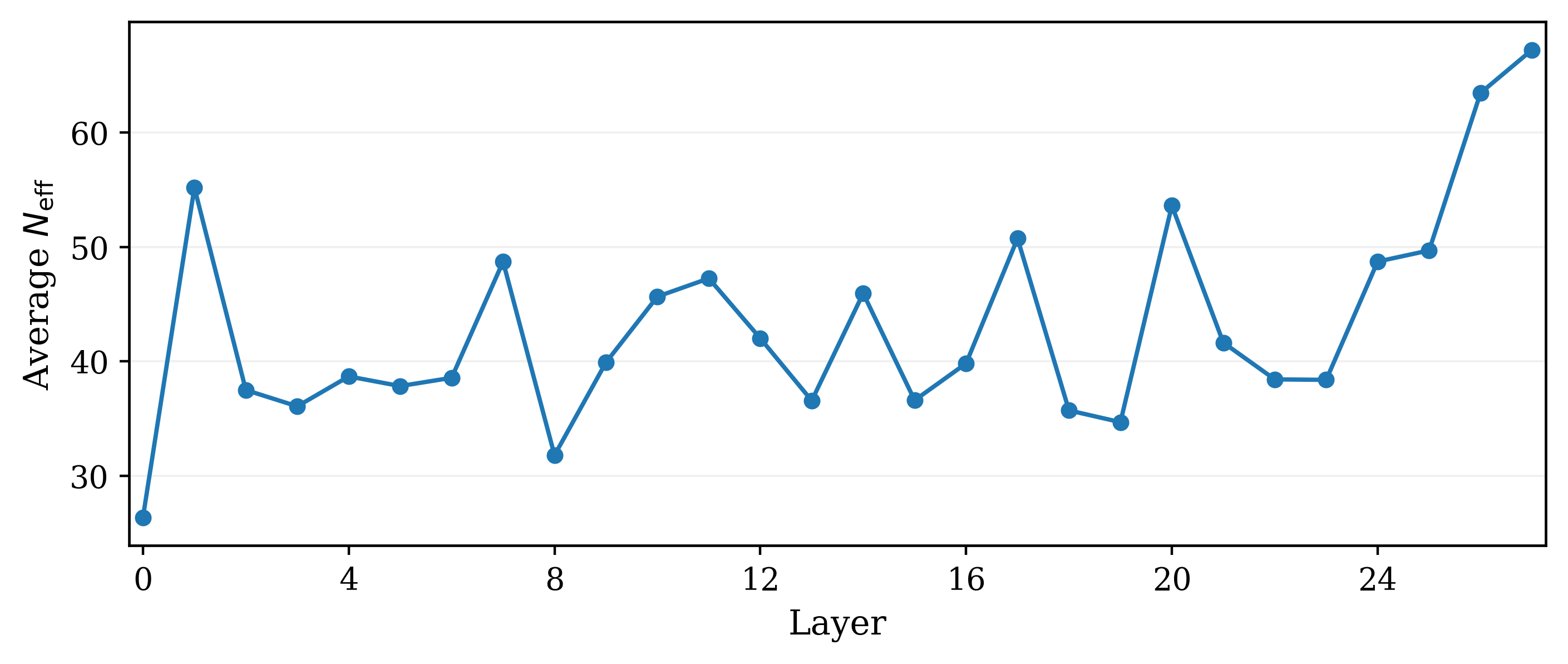}
  \caption{\textbf{Composition mixing persists across layers.}
Old-task mass-weighted average effective composition number \(N_{\mathrm{eff}}\) (Eq.~\eqref{eq:neff}) across routes in each MoE layer.
Values substantially above \(1\) indicate that routes typically aggregate multiple feature compositions under the old-task distribution.}
  \label{fig:layer_neff}
\end{figure}

Fig.~\ref{fig:layer_neff} shows that the old-task exposure-weighted \(N_{\mathrm{eff}}\) remains substantially above \(1\) throughout the MoE stack, indicating that routes typically aggregate multiple feature compositions under the old-task token distribution.

\textbf{Route-conditioned old-task loss and forgetting.}
For each MoE module at layer \(\ell\), we compute the route-conditioned loss on old-task data:
\begin{equation}
L_{\mathrm{old}}^{(\ell)}(r;\theta)
=
\mathbb{E}\!\left[\,-\log p_\theta(x_{t+1}\mid x_{\le t}) \;\middle|\; r^{(\ell)}_{x,t}=r \right],
\label{eq:route_old_loss}
\end{equation}
and define route-wise forgetting as
\begin{equation}
\Delta L_{\mathrm{old}}^{(\ell)}(r)
=
L_{\mathrm{old}}^{(\ell)}(r;\theta_{\mathrm{new}})
-
L_{\mathrm{old}}^{(\ell)}(r;\theta_{\mathrm{old}}).
\label{eq:deltaL_route}
\end{equation}

\textbf{Why higher mixing increases susceptibility.}
Our key mechanism is: if a route \(r\) mixes many distinct compositions, then no small subset of
``well-protected'' compositions can account for most old-task tokens routed to \(r\).
Consequently, a nontrivial fraction of the old-task mass must lie in compositions that are not reliably protected under typical update directions, making the route more susceptible to forgetting. This is consistent with Fig.~\ref{fig:grad_alignment}: since cross-composition gradient directions are weakly aligned, an update direction that preserves a small subset of compositions is unlikely to simultaneously preserve many others that share the same route.

We formalize this in two steps.
Lemma~\ref{lem:neff_mass} relates the effective composition number \(N_{\mathrm{eff}}(r)\)
to how much probability mass can be concentrated on any \(m\) compositions.
Theorem~\ref{thm:mixing_forgetting_compact} then converts this mass guarantee into a lower bound on the
\emph{route-level} old-loss increase.

\begin{lemma}[Mixing mass bound]
\label{lem:neff_mass}
Fix a route \(r\) with composition distribution \(p(c\mid r)\) over \(c\in\mathcal C\), and define
\[
N_{\mathrm{eff}}(r)\;=\;\Bigl(\sum_{c\in\mathcal C} p(c\mid r)^2\Bigr)^{-1}.
\]
Then for any \(S\subseteq\mathcal C\) with \(|S|\le m\),
\[
\Pr_{C\sim p(\cdot\mid r)}[C\notin S]\ \ge\ 1-\sqrt{\frac{m}{N_{\mathrm{eff}}(r)}}.
\]
\end{lemma}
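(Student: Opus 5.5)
The plan is to bound the probability mass that any set $S$ of at most $m$ compositions can capture, $\Pr[C\in S]=\sum_{c\in S}p(c\mid r)$, using only the collision probability $\sum_c p(c\mid r)^2 = 1/N_{\mathrm{eff}}(r)$. The natural tool is Cauchy--Schwarz applied to the restricted sum. Writing $p_c$ for $p(c\mid r)$, we have
\[
\sum_{c\in S}p_c \;\le\; \sqrt{|S|}\,\Bigl(\sum_{c\in S}p_c^2\Bigr)^{1/2} \;\le\; \sqrt{m}\,\Bigl(\sum_{c\in\mathcal C}p_c^2\Bigr)^{1/2} \;=\; \sqrt{\frac{m}{N_{\mathrm{eff}}(r)}}.
\]
The first inequality is Cauchy--Schwarz against the all-ones vector on $S$. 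The second uses $|S|\le m$ and the fact that extending the sum of squares from $S$ to all of $\mathcal C$ only adds nonnegative terms.

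Taking complements then gives
\[
\Pr_{C\sim p(\cdot\mid r)}[C\notin S] \;=\; 1-\sum_{c\in S}p_c \;\ge\; 1-\sqrt{m/N_{\mathrm{eff}}(r)},
\]
which is exactly the claim.

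Only two points need care, and neither is a real obstacle. First, $N_{\mathrm{eff}}(r)$ must be well defined, i.e.\ $p(\cdot\mid r)$ must be a genuine probability distribution, so that $\sum_c p_c^2>0$. If $\mathcal C$ is countably infinite, the sum of squares still converges since it is at most $\sum_c p_c = 1$. Second, when $m\ge N_{\mathrm{eff}}(r)$ the right-hand side is nonpositive and the bound holds trivially. The argument is therefore the single Cauchy--Schwarz step. The remark worth adding afterwards is that the bound is tight when $p(\cdot\mid r)$ is uniform on $N_{\mathrm{eff}}$ atoms and $m\le N_{\mathrm{eff}}$ (tight only up to the square-root loss); a sharper bound $1-m\max_c p_c$ is also available but is not expressed through $N_{\mathrm{eff}}$.
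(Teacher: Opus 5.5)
Your proof is correct and follows the paper's argument: Cauchy--Schwarz against the indicator $\mathbf{1}_S$, then taking complements (restricting the sum of squares to $S$ before extending it to $\mathcal C$ makes no real difference). Your closing remark is off, though the proof does not depend on it: for $p$ uniform on $N_{\mathrm{eff}}$ atoms the captured mass is $m/N_{\mathrm{eff}}$, not $\sqrt{m/N_{\mathrm{eff}}}$, so the bound is not tight there (the near-extremal cases instead put mass $\sqrt{m/N_{\mathrm{eff}}}$ evenly on $m$ atoms and spread the rest thinly), and $1-m\max_c p_c$ is not uniformly sharper (one atom of mass $1/2$ with the rest diffuse and $m=2$ makes it vacuous, while your bound gives about $0.29$).
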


\begin{theorem}[Composition mixing increases forgetting susceptibility]
\label{thm:mixing_forgetting_compact}
Fix a route \(r\). For each \(c\in\mathcal C\), let \(F_c(\theta)\) be the old-task loss restricted to tokens
routed to \(r\) with composition \(c\), and define
\[
F_r(\theta)\;=\;\mathbb{E}_{C\sim p(\cdot\mid r)}\!\big[F_C(\theta)\big].
\]
Consider one update \(\theta^+=\theta-\eta\hat u\) with \(\|\hat u\|=1\), \(\eta>0\).
Assume each \(F_c\) is \(L\)-smooth and \(\|\nabla F_c(\theta)\|\le G\) for all \(c\).

Let \(S\subseteq\mathcal C\) with \(|S|\le m\) denote a subset of compositions that happen to be well-aligned with the update. Suppose there exist \(\rho\in(0,1]\) and \(\kappa>0\) such that
for all \(c\notin S\),
\[
\Pr\!\big(F_c(\theta^+)-F_c(\theta)\ \ge\ \kappa\big)\ \ge\ \rho,
\]
where the probability is over the randomness defining \(\hat u\).
Let \(a:=\sqrt{m/N_{\mathrm{eff}}(r)}\), so that by Lemma~\ref{lem:neff_mass},
\(\Pr_{C\sim p(\cdot\mid r)}[C\notin S]\ge (1-a)_+\).
Then
\[
\mathbb{E}\!\big[F_r(\theta^+)-F_r(\theta)\big]
\ \ge\
(1-a)_+\Big(\rho\kappa-(1-\rho)\big(\eta G+\tfrac{L\eta^2}{2}\big)\Big).
\]
In particular, if \(\rho\kappa>(1-\rho)\big(\eta G+\tfrac{L\eta^2}{2}\big)\), the lower bound is nondecreasing in \(N_{\mathrm{eff}}(r)\) and becomes positive for sufficiently
large \(N_{\mathrm{eff}}(r)\).
\end{theorem}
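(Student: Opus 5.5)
The plan is to decompose the route-level change $F_r(\theta^+)-F_r(\theta)$ by composition, using linearity of expectation. Set $\Delta_c:=F_c(\theta^+)-F_c(\theta)$ and $B:=\eta G+\tfrac{L\eta^2}{2}$. Then
\[
\mathbb{E}\big[F_r(\theta^+)-F_r(\theta)\big]=\sum_{c\notin S}p(c\mid r)\,\mathbb{E}[\Delta_c]+\sum_{c\in S}p(c\mid r)\,\mathbb{E}[\Delta_c],
\]
where the expectation is over the randomness of $\hat u$. I will bound each $\mathbb{E}[\Delta_c]$ from below, then combine the pieces with Lemma~\ref{lem:neff_mass}.

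\textbf{Deterministic floor from smoothness.} For every $c$, $L$-smoothness gives
\[
F_c(\theta-\eta\hat u)\ \ge\ F_c(\theta)-\eta\langle\nabla F_c(\theta),\hat u\rangle-\tfrac{L\eta^2}{2}.
\]
Cauchy--Schwarz with $\|\hat u\|=1$ and $\|\nabla F_c(\theta)\|\le G$ then yields $\Delta_c\ge -B$ surely.

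\textbf{Expected increase for $c\notin S$.} Let $E_c=\{\Delta_c\ge\kappa\}$ and $q_c=\Pr(E_c)\ge\rho$. Splitting on $E_c$ and using the floor on the complement,
\[
\mathbb{E}[\Delta_c]\ \ge\ q_c\kappa-(1-q_c)B .
\]
This expression is increasing in $q_c$, since its derivative in $q_c$ is $\kappa+B>0$. Therefore $\mathbb{E}[\Delta_c]\ge D:=\rho\kappa-(1-\rho)B$ for every $c\notin S$.

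\textbf{Combining.} The first sum is at least $D\cdot\Pr_{C\sim p(\cdot\mid r)}[C\notin S]$. When $D\ge 0$, I lower-bound this probability by $(1-a)_+$ using Lemma~\ref{lem:neff_mass}. That lemma is the elementary fact that $\Pr[C\in S]=\sum_{c\in S}p(c\mid r)\le\sqrt{|S|}\,\big(\sum_c p(c\mid r)^2\big)^{1/2}=\sqrt{m/N_{\mathrm{eff}}(r)}$, by Cauchy--Schwarz. The "in particular" clause then follows because $a=\sqrt{m/N_{\mathrm{eff}}(r)}$ is decreasing in $N_{\mathrm{eff}}(r)$. Hence $(1-a)_+$ is nondecreasing and positive once $N_{\mathrm{eff}}(r)>m$, and multiplying by $D>0$ preserves both properties. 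When $D<0$, replacing the probability by the smaller quantity $(1-a)_+$ goes the wrong way. I would restrict the main claim to the regime $D>0$, which is the regime the theorem emphasizes. Alternatively, the $D<0$ case can be handled by bounding directly with the true probability $\Pr[C\notin S]$.

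\textbf{Main obstacle: the compositions in $S$.} The stated bound contains no term for $c\in S$. Using only the smoothness floor, the second sum is at least $-B\,\Pr[C\in S]\ge -aB$, which would add $-aB$ to the bound. To obtain exactly the displayed inequality, I will make explicit the interpretation that "well-aligned with the update" means $\mathbb{E}[\Delta_c]\ge 0$ for $c\in S$, i.e.\ the update does not increase their expected loss. Under that reading the second sum is nonnegative and can be dropped. If this reading is not intended, I would state the corrected bound
\[
(1-a)_+\,D-aB .
\]
This bound is still nondecreasing in $N_{\mathrm{eff}}(r)$, because $-aB$ increases as $a$ decreases. It becomes positive for large $N_{\mathrm{eff}}(r)$ whenever $D>0$, since $a\to 0$ as $N_{\mathrm{eff}}(r)\to\infty$. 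So the qualitative conclusion survives either way.
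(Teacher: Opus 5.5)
Your argument is the paper's proof step for step: the smoothness floor $\Delta_c\ge-(\eta G+\tfrac{L\eta^2}{2})$, splitting on $\{\Delta_c\ge\kappa\}$, linearity over $p(\cdot\mid r)$, and Cauchy--Schwarz for the lemma. Both issues you flag are genuine gaps in the paper's version, which simply drops the $c\in S$ terms and multiplies $\Pr[C\notin S]\ge(1-a)_+$ by the bracket without checking its sign.

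Your caveats are genuinely necessary. Take a unit vector $e$, $S=\{c_1\}$, $p(c_1\mid r)=0.9$, deterministic $\hat u=e$, $F_{c_1}(\theta)=G\langle e,\theta\rangle=-F_{c_2}(\theta)$, $\kappa=\eta G$ and $\rho=1$. Then the stated bound is $(1-\sqrt{0.82})\,\eta G>0$, while the actual change is $-0.8\,\eta G$. So either your explicit hypothesis on $S$ or your corrected bound $(1-a)_+D-aB$ is required. The corrected bound in fact holds with no sign condition on $D$, since $\Pr[C\notin S]\,D-\Pr[C\in S]\,B$ is increasing in $\Pr[C\notin S]$ because $D+B=\rho(\kappa+B)>0$.
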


\emph{Proofs in Appendix~\ref{app:mixing_forgetting_proof}.}

\textbf{Empirical association between mixing and forgetting.}
We empirically test the theorem's qualitative prediction by pooling routes across modules and examining how \(\Delta L_{\mathrm{old}}\) varies with the mixing score \(N_{\mathrm{eff}}\).
To ensure the trend reflects exposure of \emph{old-task tokens} (rather than being dominated by many rarely used routes), we form bins by \emph{mass-quantiles} of \(N_{\mathrm{eff}}\) under the old-task routing mass
\(\mathrm{mass}_{\mathrm{old}}(r)=\Pr_{(x,t)\sim\mathcal{D}_{\mathrm{old}}}[r^{(\ell)}_{x,t}=r]\),
so each bin contains approximately equal total old-task routing mass.
Within each bin, we report the mean (and standard error) of \(\Delta L_{\mathrm{old}}\) across routes.
Fig.~\ref{fig:neff_binplot} shows that routes with larger \(N_{\mathrm{eff}}\) are associated with greater \(\Delta L_{\mathrm{old}}\), consistent with the susceptibility mechanism highlighted by Theorem~\ref{thm:mixing_forgetting_compact}.

\begin{figure}[h]
  \centering
  \includegraphics[width=0.9\linewidth]{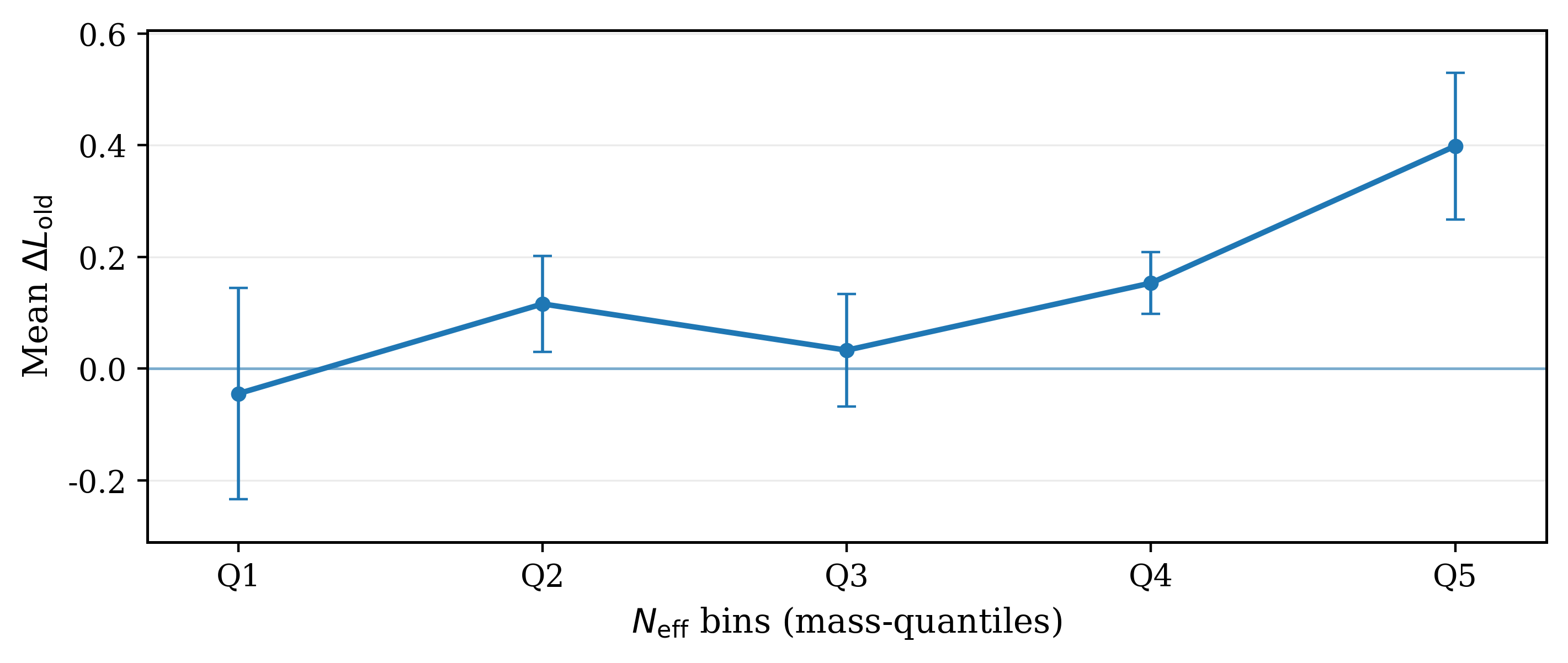}
  \caption{\textbf{More mixed routes forget more.}
Route-wise old-task loss increase \(\Delta L_{\mathrm{old}}\) versus effective composition number \(N_{\mathrm{eff}}\) (Eq.~\eqref{eq:neff}).
Routes are binned by mass-quantiles of \(N_{\mathrm{eff}}\) under old-task routing exposure \(\mathrm{mass}_{\mathrm{old}}(r)\), so each bin contains comparable old-task token mass.
Points report mean \(\Delta L_{\mathrm{old}}\) with standard error, showing a positive association between mixing and forgetting.}

  \label{fig:neff_binplot}
\end{figure}

%% file: section/method.tex
\section{Multi-Head Mixture-of-Experts (MH-MoE)}
\label{sec:method}

We introduce \textbf{MH-MoE}, a Transformer--MoE layer that performs expert routing
\emph{independently over multiple sub-representations} of the post-attention state.
Unlike standard MoE, which collapses the full representation into a single routing decision,
MH-MoE factorizes the representation into $H$ head-aligned slices and routes each slice separately,
yielding a tuple-valued routing decision. This increases routing resolution and reduces
feature-composition collisions, improving retention in continual learning.

\textbf{Head-aligned splitting.}
Let $h_t^{(\ell)}\in\mathbb{R}^d$ be the post-attention token representation at layer $\ell$
and position $t$. We partition the feature dimension into $H$ disjoint slices:
\begin{equation}
\label{eq:mhmoe_split}
\begin{aligned}
h_t^{(\ell)}
&=\big[\,h_{t,1}^{(\ell)} \,\Vert\, \cdots \,\Vert\, h_{t,H}^{(\ell)}\,\big],\\
h_{t,m}^{(\ell)} &\in \mathbb{R}^{d/H},\qquad m\in[H].
\end{aligned}
\end{equation}

\textbf{Head-private routing.}
Each head $m$ has its own router $W_{\mathrm{rt}}^{(\ell,m)}$ and its own private expert bank
$\{E^{(m)}_k\}_{k=1}^K$. The router selects the top-$k$ experts using only the head slice:
\begin{equation}
\label{eq:mhmoe_router}
\begin{aligned}
a_{t,m}^{(\ell)}
&= W_{\mathrm{rt}}^{(\ell,m)}\, h_{t,m}^{(\ell)} \in \mathbb{R}^{K},\\
\mathcal{S}_{t,m}^{(\ell)}
&=\mathrm{TopK}\!\big(a_{t,m}^{(\ell)},\,k\big)\subseteq [K],\\
\alpha_{t,m,j}^{(\ell)}
&=\frac{\exp\!\big(a_{t,m,j}^{(\ell)}\big)}
{\sum_{q\in \mathcal{S}_{t,m}^{(\ell)}} \exp\!\big(a_{t,m,q}^{(\ell)}\big)},
\qquad j\in \mathcal{S}_{t,m}^{(\ell)} .
\end{aligned}
\end{equation}
The overall routing decision is the tuple of selected expert sets
$\mathbf{S}_t^{(\ell)}=\big(\mathcal{S}_{t,1}^{(\ell)},\ldots,\mathcal{S}_{t,H}^{(\ell)}\big)$.

\textbf{Head-private expert output aggregation.}
Each expert takes a $d/H$-dimensional slice and produces a full $d$-dimensional output:
\begin{equation}
\label{eq:mhmoe_expert}
E^{(m)}_k:\mathbb{R}^{d/H}\rightarrow \mathbb{R}^{d}.
\end{equation}
Given $\mathbf{S}_t^{(\ell)}$, MH-MoE applies the selected experts for each head:
\begin{equation}
\label{eq:mhmoe_exec}
y_{t,m}^{(\ell)}
=\sum_{j\in \mathcal{S}_{t,m}^{(\ell)}}
\alpha_{t,m,j}^{(\ell)}\,
E^{(m)}_{j}\!\big(h_{t,m}^{(\ell)}\big)
\in \mathbb{R}^{d},
\qquad m\in[H],
\end{equation}
and aggregates by summation:
\begin{equation}
\label{eq:mhmoe_sum}
y_t^{(\ell)}=\sum_{m=1}^{H} y_{t,m}^{(\ell)} \in \mathbb{R}^{d}.
\end{equation}

\textbf{Implicit routing resolution.}
Although MH-MoE stores only $HK$ experts (organized into $H$ private banks),
the tuple $\mathbf{S}_t^{(\ell)}$ induces an implicit
$\big(\binom{K}{k}\big)^H$-way partition of tokens.
Equivalently, the selection-indexed composite mapping is
\begin{equation}
\label{eq:mhmoe_composite}
E_{\mathbf{S}}(h)
=\sum_{m=1}^{H}\;
\sum_{j\in \mathcal{S}_{m}}
\alpha_{m,j}\,E^{(m)}_{j}(h_m).
\end{equation}
where \(h=[h_1\Vert\cdots\Vert h_H],\;
\mathbf{S}=(\mathcal{S}_1,\ldots,\mathcal{S}_H).\)

\begin{algorithm}[tb]
  \caption{\textsc{MH-MoE}}
  \label{alg:mhmoe_forward}
  \begin{algorithmic}
    \STATE {\bfseries Input:} token states $\{h_t^{(\ell)}\}_{t=1}^{T}$, $h_t^{(\ell)}\!\in\!\mathbb{R}^d$; heads $H$ ($H\mid d$); routers $\{W_{\mathrm{rt}}^{(\ell,m)}\}_{m=1}^{H}$; experts $\{E^{(m)}_j\}_{m=1,j=1}^{H,K}$; top-$k$.
    \STATE {\bfseries Output:} $\{y_t^{(\ell)}\}_{t=1}^{T}$, $y_t^{(\ell)}\!\in\!\mathbb{R}^{d}$.
    \FOR{$t=1$ {\bfseries to} $T$}
      \STATE Split $h_t^{(\ell)}=[h_{t,1}^{(\ell)}\Vert\cdots\Vert h_{t,H}^{(\ell)}]$, $h_{t,m}^{(\ell)}\in\mathbb{R}^{d/H}$; set $y_t^{(\ell)}\leftarrow \mathbf{0}$.
      \FOR{$m=1$ {\bfseries to} $H$}
        \STATE $a_{t,m}^{(\ell)} \leftarrow W_{\mathrm{rt}}^{(\ell,m)} h_{t,m}^{(\ell)}$; $\mathcal{S}_{t,m}^{(\ell)} \leftarrow \mathrm{TopK}(a_{t,m}^{(\ell)},k)$.
        \STATE $\alpha_{t,m,\cdot}^{(\ell)} \leftarrow \mathrm{Softmax}\!\left(a_{t,m,\cdot}^{(\ell)}\right)$ restricted to $\mathcal{S}_{t,m}^{(\ell)}$.
        \STATE $y_t^{(\ell)} \leftarrow y_t^{(\ell)} + \sum_{j\in \mathcal{S}_{t,m}^{(\ell)}} \alpha_{t,m,j}^{(\ell)}\, E^{(m)}_{j}\!\left(h_{t,m}^{(\ell)}\right)$.
      \ENDFOR
    \ENDFOR
  \end{algorithmic}
\end{algorithm}

%% file: section/related_work.tex
\section{Related Work}
\textbf{Mixture of Experts} Mixture of Experts (MoE) is a foundational paradigm for conditional computation, where a router dynamically selects among multiple experts so that different subsets of parameters specialize for different inputs\cite{jacobs1991adaptive,jordan1994hierarchical}. Early work extended MoE beyond shallow mixtures to deep architectures with stacked routers and experts to increase capacity and expressivity\cite{eigen2013learning}. A major practical breakthrough was the sparse MoE layer\cite{shazeer2017outrageously}, which enforces sparse expert activation per example/token, improving scalability and training stability while reducing compute. Since then, MoE has been integrated into a wide range of neural backbones—including convolutional and Transformer-based models—and has achieved strong results across tasks. In the LLM regime, MoE is widely adopted to scale model capacity under fixed compute budgets, and substantial effort has been devoted to routing design\cite{lepikhin2020gshard,fedus2022switch,du2022glam}. Representative strategies include token-driven routing where each token selects its top-(k) experts\cite{shazeer2017outrageously,fedus2022switch}, expert-driven routing where experts select the top-(k) tokens to process\cite{zhou2022mixture}, and global assignment schemes that decide expert allocation at a higher granularity\cite{lewis2021base,riquelme2021scaling}.

\textbf{Catastrophic Forgetting} Catastrophic forgetting refers to the rapid degradation of previously learned capabilities when a model is trained sequentially on new data, a phenomenon classically attributed to \emph{parameter interference} in shared networks where updates for new tasks overwrite weights supporting earlier tasks\cite{mccloskey1989catastrophic}. Mitigation strategies broadly fall into (i) \emph{regularization/importance-based} methods that constrain changes to parameters deemed crucial for past tasks (e.g., EWC \cite{kirkpatrick2017overcoming}and Synaptic Intelligence\cite{zenke2017continual}s) , (ii) \emph{replay and constraint-based} methods that preserve past behavior by episodic memory or gradient projection\cite{lopez2017gradient,rebuffi2017icarl} , and (iii) \emph{architectural isolation/expansion} approaches that allocate different tasks to different parameter subsets or grow capacity over time to reduce interference. In the LLM setting, recent empirical studies confirm that continual instruction tuning and sequential domain adaptation can induce substantial forgetting across knowledge and reasoning abilities\cite{luo2025empirical,ke2023continual}, motivating a growing literature that revisits continual learning principles under the scale and representation entanglement of modern LLMs\cite{shi2025continual}.

%% file: section/experiment.tex
\section{Experiments}
\label{sec:experiment}
\input{table/main_table}

\subsection{Experimental Setup}
We evaluate whether MH-MoE improves continual learning by reducing forgetting while maintaining strong overall accuracy.

\textbf{Benchmark.}
We use TRACE~\cite{wang2023trace}, a continual-learning suite of eight diverse tasks: C-STANCE, FOMC, MeetingBank, Py150, ScienceQA, NumGLUE-cm, NumGLUE-ds, and 20Minuten.
We follow the standard TRACE protocol and train sequentially over tasks.

\textbf{Base models.}
We build MH-MoE on pretrained Qwen3-0.6B and Qwen3-8B~\cite{yang2025qwen3}.
Unless otherwise noted, all methods share the same tokenizer, maximum sequence length, optimizer family, and training schedule.

\textbf{Baselines.}
We compare against (i) SeqLoRA, which sequentially trains a single shared LoRA adapter across tasks; (ii) LoRAMoE~\cite{dou2024loramoe}, a single-representation-routed MoE matched to MH-MoE in activated parameters per token; and (iii) continual-learning baselines EWC~\cite{kirkpatrick2017overcoming}, GEM~\cite{lopez2017gradient}, and O-LoRA~\cite{wang2023orthogonal}.

\textbf{Hardware.}
All experiments run on a cluster of 64 NVIDIA H100 GPUs.

\textbf{Metrics.}
We report \textbf{Overall Performance (OP)}, the average score across all tasks after the final task, and \textbf{Backward Transfer (BWT)}, the mean change on earlier tasks after learning later ones. Higher OP and higher (less negative) BWT indicate better continual-learning performance.

Experimental details are provided in Appendix~\ref{app:exp_detail}.

\subsection{Main Results}
\label{sec:main-results}

\textbf{MH-MoE achieves the best overall forgetting--accuracy tradeoff among routing-based methods.}
Table~\ref{tab:trace_final_per_task} shows that MH-MoE consistently improves over LoRAMoE, in both overall performance (OP) and backward transfer (BWT).
On Qwen3-0.6B, MH-MoE raises OP from 0.378 to 0.467 and substantially reduces forgetting (BWT from $-0.112$ to $-0.045$), with broad gains across datasets (notably C-STANCE and FOMC).
On Qwen3-8B, where forgetting is already milder, MH-MoE still attains the strongest aggregate results, improving OP from 0.551 to 0.569 and achieving the least-negative BWT ($-0.051$ vs.\ $-0.055$), indicating that the benefits of multi-head routing persist at larger model scales.

\textbf{MH-MoE is competitive with task-aware continual-learning baselines while remaining task-agnostic.}
Against regularization- and rehearsal-based methods (EWC, GEM), MH-MoE achieves higher OP and less negative BWT on both backbones
(Qwen3-0.6B: OP 0.467 vs.\ 0.355/0.357; BWT $-0.045$ vs.\ $-0.123/-0.124$).
O-LoRA is strong on some tasks via explicit orthogonality constraints, but MH-MoE offers a better overall retention--accuracy tradeoff (Qwen3-0.6B: BWT $-0.045$ vs.\ $-0.081$).
Unlike these baselines, which rely on explicit task boundaries during training, MH-MoE can be trained and deployed in a task-agnostic stream via token-level routing.

\subsection{Analysis}
\label{sec:exp_analysis}

\textbf{Head-private routing improves retention beyond route-space size and capacity.}
MH-MoE could reduce forgetting due to (i) a larger routing-outcome space or (ii) better \emph{feature isolation} from head-private routing.
To test whether (ii) matters \emph{independently} of (i), we construct a controlled comparison where the routing-outcome space and parameter count are kept comparable.
Specifically, we compare MH-MoE with $M{=}8$ heads and per-head top-1 routing among $4$ head-private experts (yielding $4^8{=}65{,}536$ routing paths) against \textsc{LoRAMoE} with $K{=}26$ experts and top-$5$ routing (yielding $\binom{26}{5}{=}65{,}780$ possible expert sets).
We also match the overall parameter budget between the two models.
Under this setup, the dominant architectural difference is whether routing is performed \emph{head-wise on sub-representations} (MH-MoE) or \emph{globally on a head-mixed representation} (\textsc{LoRAMoE}). Table~\ref{tab:route_space} shows that MH-MoE yields substantially better retention on TRACE (higher BWT) despite comparable route-space cardinality and capacity.
This rules out the trivial explanation that MH-MoE works merely by expanding the number of routing outcomes or increasing model size, and instead supports our hypothesis that \emph{head-private routing itself} reduces destructive parameter sharing.

To connect this improvement to our analysis, we further compute the route/path-wise mixing score $N_{\mathrm{eff}}$ using the same factor set (domain, stance, frequency, position).
As shown in Fig. \ref{fig:ablate_route}, even under matched route-space size, MH-MoE exhibits lower $N_{\mathrm{eff}}$ than \textsc{LoRAMoE}, indicating fewer semantic compositions collide within the same update destination.
Together with Section~\ref{sec:comp_grad}, which shows that different compositions induce weakly aligned gradient directions, this offers a mechanistic account: head-private routing reduces composition mixing (lower $N_{\mathrm{eff}}$), thereby reducing gradient conflict and improving retention.

\textbf{Smaller route spaces exacerbate composition collision.}
We additionally consider a more constrained \textsc{LoRAMoE} baseline with $K{=}4$ experts and top-1 routing (reported in Table~\ref{tab:trace_final_per_task}).
Its routing-outcome space is orders of magnitude smaller than the MH-MoE configuration analyzed above, so many more compositions must share the same update destination.
Consistent with this prediction, the measured $N_{\mathrm{eff}}$ is larger (more mixing), providing further evidence that $N_{\mathrm{eff}}$ faithfully captures composition collision and tracks retention behavior across routing granularities (Fig. \ref{fig:ablate_neff}).

\input{table/ablate_route}

\begin{figure}[ht]
  \centering

  \begin{subfigure}[t]{0.8\linewidth}
    \centering
    \includegraphics[width=\linewidth]{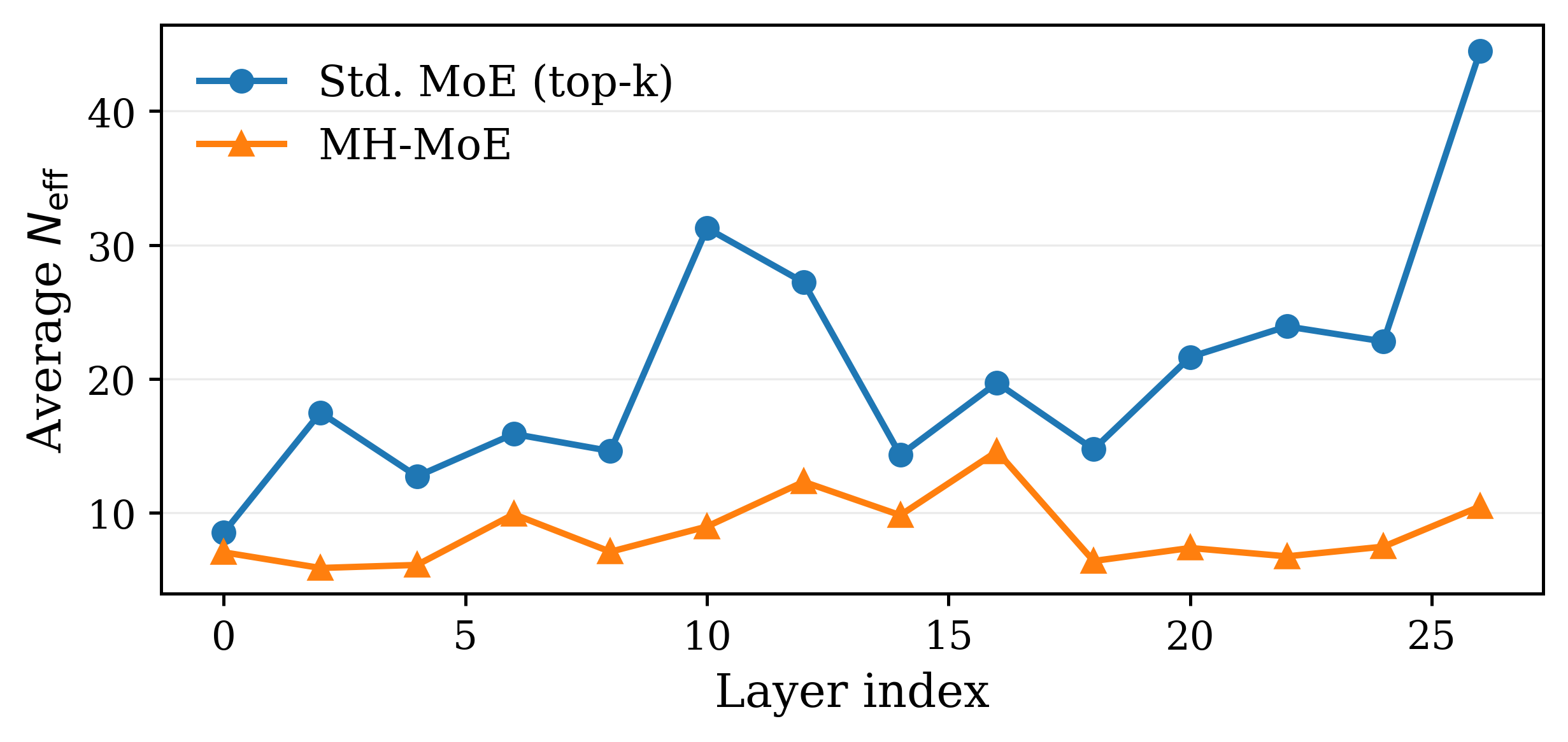}
    \caption{}
    \label{fig:ablate_route}
  \end{subfigure}
  \begin{subfigure}[t]{0.8\linewidth}
    \centering
    \includegraphics[width=\linewidth]{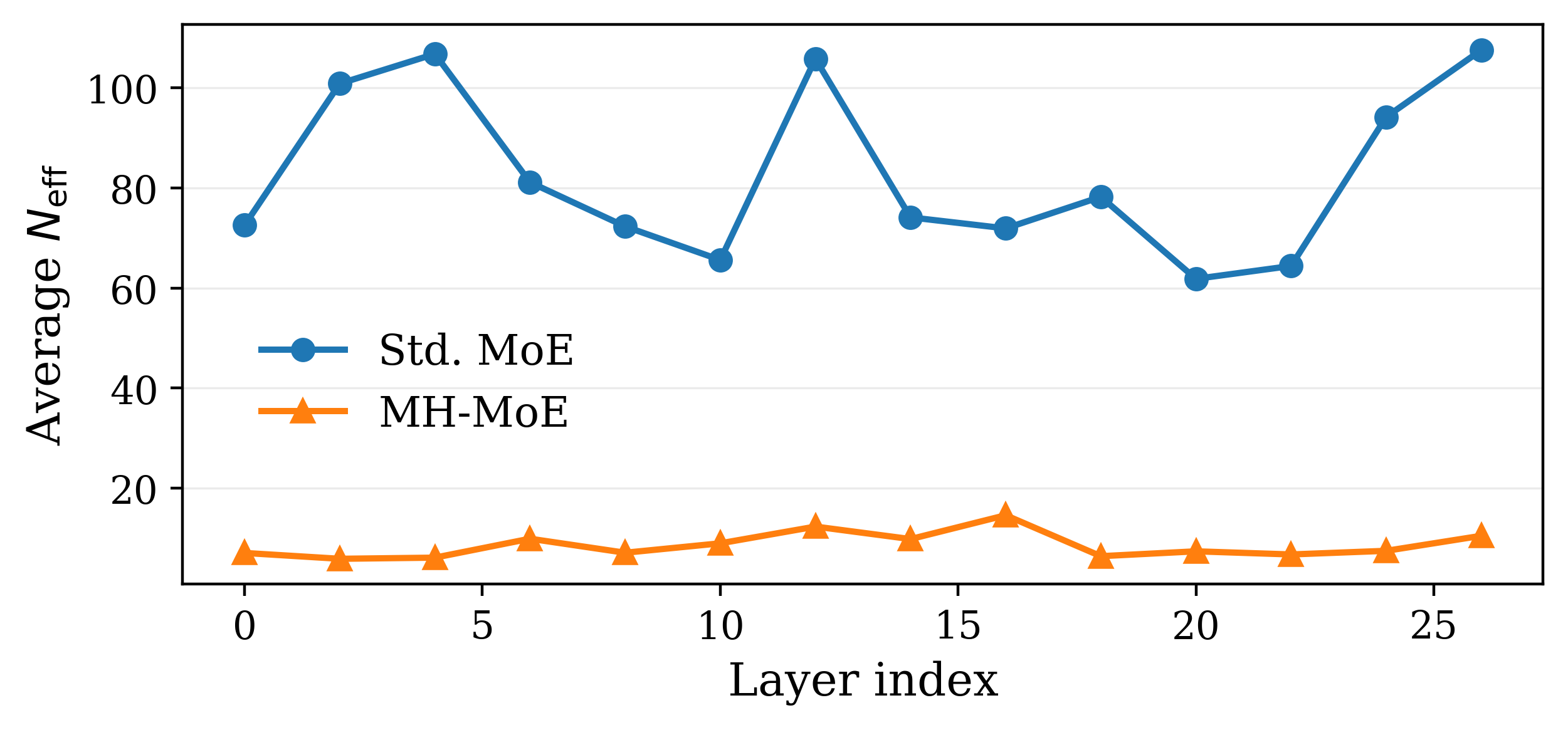}
    \caption{}
    \label{fig:ablate_neff}
  \end{subfigure}

  \caption{\textbf{Head-private routing reduces composition collision.}
\textbf{(a)} Under matched route-space size and activated budget (MH-MoE: $M{=}8$, top-1 over $4$ head-private experts; \textsc{LoRAMoE}: $K{=}26$, top-$5$), MH-MoE yields lower route/path-wise mixing $N_{\mathrm{eff}}$, indicating fewer semantic composition collisions.
\textbf{(b)} With a much smaller route space (\textsc{LoRAMoE}: $K{=}4$, top-1; Table~\ref{tab:trace_final_per_task}), $N_{\mathrm{eff}}$ increases, showing that constrained routing forces more compositions to share update destinations.}

  \label{fig:ablate_main}
\end{figure}

\textbf{MH-MoE mitigates forgetting consistently across task orderings.}
Continual-learning performance can be sensitive to the task sequence, so we additionally test whether MH-MoE’s retention gains persist under different TRACE orderings.
Table~\ref{tab:task_order} reports results for three task permutations.
Across all orders, MH-MoE achieves higher OP and substantially less negative BWT than \textsc{LoRAMoE}, indicating that the forgetting mitigation is not an artifact of a particular curriculum.
This robustness is consistent with our mechanism: head-private routing reduces composition collision under diverse input streams, yielding more stable retention regardless of task order.

\input{table/task_order}

\textbf{More heads increase routing granularity and improve overall performance.}
We ablate the number of routing heads $M$ in MH-MoE.
Increasing $M$ increases tuple-valued routing resolution (more virtual paths), which can reduce composition mixing, at the cost of additional routing/dispatch and reduced per-head capacity.
Keeping other settings fixed, we evaluate $M\in\{2,4,8,16\}$ on TRACE.
Table~\ref{tab:ablate_heads} shows that OP improves with $M$ and is best at $M{=}16$.
\input{table/ablation_head}

\textbf{Computation overhead.}
Table~\ref{tab:overhead} reports training overhead on Qwen3-8B with the base model frozen (batch size $B\!=\!1$, sequence length $T\!=\!512$, bf16).
MH-MoE with $M{=}8$ incurs only a small overhead relative to \textsc{LoRAMoE}: $3184.7$ tok/s vs.\ $3215.4$ tok/s,
$160.85$ ms vs.\ $158.99$ ms per training step (about $1.01\times$ higher latency), and $\sim$4\% higher peak memory,
consistent with additional head-wise routing and dispatch.

\input{table/overhead}

\section{Conclusion}
\label{sec:conclusion}

We study why MoE Transformers can still catastrophically forget under continual learning. We argue the bottleneck arises \emph{before} routing: multi-head attention mixes head-structured signals into a single router input, so routing is driven by feature co-occurrences rather than separable head channels. This yields composition collisions, which we quantify via the route-wise effective composition number \(N_{\mathrm{eff}}\) and show predicts route-local forgetting. Motivated by this, we propose \textsc{MH-MoE}, which routes head-wise sub-representations to reduce mixing and improves retention on TRACE over LoRAMoE and other continual-learning baselines.

%% file: table/main_table.tex
\begin{table*}[!htbp]
\centering
\caption{Continual learning performance on TRACE after training on all tasks.
We report final score on each dataset, Overall Performance (OP), and Backward Transfer (BWT).
Abbreviations: CS=C-STANCE, FM=FOMC, MB=MeetingBank, PY=Py150, SQ=ScienceQA, NC=NumGLUE-cm, ND=NumGLUE-ds, 20M=20Minuten.}
\label{tab:trace_final_per_task}
\vspace{0.3em}
\setlength{\tabcolsep}{3.4pt}
\renewcommand{\arraystretch}{1.02}
\small
\begin{tabular}{llcccccccccc}
\toprule
Base model & Method & CS & FM & MB & PY & SQ & NC & ND & 20M & OP$\uparrow$ & BWT$\uparrow$ \\
\midrule
\multirow{6}{*}{Qwen3-0.6B}
& SeqLoRA         &  12.8&  26.2&  14.6&  43.5&  59.8&  7.4&  34.8&  36.6&  29.5& -19.0 \\
& EWC                 &  46.6&  42.3&  17.6&  45.8&  48.2&  8.6&  37.5&  37.0&  35.5&  -12.3\\
& GEM                 &  46.1&  30.8&  17.7&  45.5&  53.2&  19.8&  35.7&  36.9&  35.7&  -12.4\\
& O-LoRA &  46.3&  54.4&  \textbf{20.9}&  47.9&  65.3&  27.2&  \textbf{48.3}&  \textbf{37.2}&  43.4&  -8.1\\
& LoRAMoE &  34.3&  52.0&  12.0&  46.3&  61.4&  14.8&  44.3&  \textbf{37.2}&  37.8&  -11.2\\
& MH-MoE       &  \textbf{48.8}&  \textbf{67.3}&  18.8&  \textbf{49.3}&  \textbf{70.1}&  \textbf{34.6}&  \textbf{48.3}&  36.7&  \textbf{46.7}&  \textbf{-4.5}\\
\midrule
\multirow{6}{*}{Qwen3-8B}
& SeqLoRA         &  49.6&  56.7&  16.7&  49.7&  88.8&  66.7&  63.7&  40.2&  54.0& -7.3 \\
& EWC                 &  50.1&  61.3&  17.1&  52.2&  86.5&  64.2&  64.3&  40.6&  54.5&  -7.5\\
& GEM                 &  \textbf{52.3}&  62.9&  18.0&  52.5&  86.2&  59.3&  61.8&  40.5&  54.3&  -6.4\\
& O-LoRA                 &  51.5&  64.5&  \textbf{21.5}&  \textbf{54.6}&  90.7&  66.7&  63.4&  40.6&  56.7&  -5.4\\
& LoRAMoE &  49.1&  65.7&  14.6&  52.8&  89.2&  \textbf{71.6}&  58.8&  39.2&  55.1&  -5.5\\
& MH-MoE       &  50.5&  \textbf{65.9}&  18.2&  50.9&  \textbf{90.8}&  67.9&  \textbf{70.5}&  \textbf{40.7}&  \textbf{56.9}& \textbf{-5.1}\\

\bottomrule
\end{tabular}
\end{table*}

%% file: table/ablate_route.tex
\begin{table}[!htbp]
\centering
\caption{Routing-strategy ablation.}
\label{tab:route_space}
\vspace{-0.2em}
\setlength{\tabcolsep}{2.8pt}
\renewcommand{\arraystretch}{0.98}
\scriptsize
\begin{tabular}{lcccccccccc}
\toprule
Method & CS & FM & MB & PY & SQ & NC & ND & 20M & OP$\uparrow$ & BWT$\uparrow$ \\
\midrule
LoRAMoE & \textbf{49.2} & 40.5 & 15.3 & 48.8 & 69.4 & 30.9 & \textbf{52.6} & \textbf{37.8} & 43.1 & -9.3 \\
MH-MoE  & 48.8 & \textbf{67.3} & \textbf{18.8} & \textbf{49.3} & \textbf{70.1} & \textbf{34.6} & 48.3 & 36.7 & \textbf{46.7} & \textbf{-4.5} \\
\bottomrule
\end{tabular}
\vspace{-0.6em}
\end{table}

%% file: table/task_order.tex
\begin{table}[!htbp]
\centering
\caption{Task-ordering ablation on TRACE.}
\label{tab:task_order}
\vspace{-0.2em}
\setlength{\tabcolsep}{2.6pt}
\renewcommand{\arraystretch}{0.98}
\scriptsize
\begin{tabular}{llcccccccccc}
\toprule
Order & Method & CS & FM & MB & PY & SQ & NC & ND & 20M & OP$\uparrow$ & BWT$\uparrow$ \\
\midrule
\multirow{2}{*}{1} & LoRAMoE & 20.8 & 0.6 & 16.2 & 41.4 & 36.0 & 32.0 & 42.8 & 37.0 & 28.4 & -21.4 \\
                  & MH-MoE  & \textbf{51.3} & \textbf{47.4} & \textbf{20.8} & \textbf{49.9} & \textbf{61.1} & \textbf{32.1} & \textbf{47.1} & \textbf{37.7} & \textbf{43.4} & \textbf{-8.0} \\
\midrule
\multirow{2}{*}{2} & LoRAMoE & 34.1 & 0.2 & 17.0 & 46.4 & 49.3 & \textbf{21.0} & \textbf{30.2} & 37.2 & 29.4 & -18.3 \\
                  & MH-MoE  & \textbf{49.2} & \textbf{55.6} & \textbf{20.6} & \textbf{50.0} & \textbf{57.4} & \textbf{21.0} & 26.5 & \textbf{37.6} & \textbf{39.7} & \textbf{-9.6} \\
\midrule
\multirow{2}{*}{3} & LoRAMoE & 29.5 & 12.9 & 26.0 & 45.6 & 50.3 & 21.0 & 15.4 & \textbf{33.7} & 29.3 & -17.0 \\
                  & MH-MoE  & \textbf{48.3} & \textbf{52.6} & \textbf{29.4} & \textbf{48.5} & \textbf{51.5} & \textbf{23.5} & \textbf{29.5} & 32.3 & \textbf{39.5} & \textbf{-10.9} \\
\bottomrule
\end{tabular}
\vspace{-0.6em}
\end{table}

%% file: table/ablation_head.tex
\begin{table}[!htbp]
\centering
\caption{MH-MoE head-count ablation on TRACE (OP$\uparrow$).}
\label{tab:ablate_heads}
\vspace{-0.2em}
\setlength{\tabcolsep}{6.5pt}
\renewcommand{\arraystretch}{0.98}
\scriptsize
\begin{tabular}{ccccc}
\toprule
$M$ & 2 & 4 & 8 & 16 \\
\midrule
OP$\uparrow$ & 53.1 & 52.6 & 54.5 & \textbf{56.9} \\
\bottomrule
\end{tabular}
\vspace{-0.6em}
\end{table}

%% file: table/overhead.tex
\begin{table}[!htbp]
\centering
\caption{Compute and memory overhead}
\label{tab:overhead}
\vspace{-0.2em}
\setlength{\tabcolsep}{2.4pt}
\renewcommand{\arraystretch}{0.98}
\scriptsize
\begin{tabular}{lccc}
\toprule
Method & Tok/s$\uparrow$ & Mem (GiB)$\downarrow$ & ms/step$\downarrow$ \\
\midrule
LoRAMoE & 3215.4 & 40.22 & 158.99 \\
MH-MoE  & 3184.7 & 41.95 & 160.85 \\
\bottomrule
\end{tabular}
\vspace{-0.6em}
\end{table}

%% file: section/appendix.tex
\section{Appendix}
\subsection{Head-wise Causal Importance}
\label{app:head_feature_metric}

Fix a layer \(\ell\). Let the router input at token \(t\) be \(h_t^{(\ell)}\in\mathbb{R}^{d}\),
with \(H\) heads and head dimension \(d_h\) so \(d=H d_h\). We write
\[
h_t^{(\ell)}=\big[h_{t,1}^{(\ell)};\dots;h_{t,H}^{(\ell)}\big],\qquad h_{t,m}^{(\ell)}\in\mathbb{R}^{d_h}.
\]
For each feature \(Y\), we train a linear probe \(g_Y^{(\ell)}:\mathbb{R}^{d}\to\mathcal{Y}\) on
\(\{(h_i,y_i)\}_{i=1}^{N}\) and evaluate it with \(\mathrm{Perf}(\cdot)\) (accuracy in our experiments).

\paragraph{Head mean-replacement ablation.}
Let \(\mu_m^{(\ell)}\in\mathbb{R}^{d_h}\) be the empirical mean of head \(m\)'s block over the probe dataset:
\[
\mu_m^{(\ell)} \;=\; \frac{1}{N}\sum_{i=1}^{N} h_{i,m}^{(\ell)}.
\]
Define \(\mathcal{A}_m:\mathbb{R}^{d}\to\mathbb{R}^{d}\) as replacing head \(m\)'s block by \(\mu_m^{(\ell)}\):
\[
\mathcal{A}_m(h)=\big[h_{1};\dots;h_{m-1};\mu_m^{(\ell)};h_{m+1};\dots;h_{H}\big].
\]

\paragraph{Importance score and shares.}
We define head \(m\)'s causal importance for feature \(Y\) at layer \(\ell\) as the probe performance drop:
\[
I_{Y,m}^{(\ell)}
=
\mathrm{Perf}\!\left(g_Y^{(\ell)}\right)
-
\mathrm{Perf}\!\left(g_Y^{(\ell)}\circ \mathcal{A}_m\right).
\]
We normalize across heads to obtain a per-feature distribution over heads:
\[
S_{Y,m}^{(\ell)}
=
\frac{I_{Y,m}^{(\ell)}}{\sum_{j=1}^{H} I_{Y,j}^{(\ell)} + \varepsilon},
\qquad \sum_{m=1}^{H} S_{Y,m}^{(\ell)}\approx 1,
\]
with \(\varepsilon\) a small constant for numerical stability.

\subsection{Proofs for Lemma~\ref{lem:neff_mass} and Theorem~\ref{thm:mixing_forgetting_compact}}
\label{app:mixing_forgetting_proof}

\begin{proof}[Proof of Lemma~\ref{lem:neff_mass}]
Let \(p(\cdot\mid r)\) be the distribution on \(\mathcal C\), and let \(S\subseteq\mathcal C\) with \(|S|\le m\).
By Cauchy--Schwarz,
\[
\sum_{c\in S} p(c\mid r)
\;=\;
\langle \mathbf{1}_S,\; p(\cdot\mid r)\rangle
\;\le\;
\|\mathbf{1}_S\|_2\;\|p(\cdot\mid r)\|_2
\;=\;
\sqrt{|S|}\;\sqrt{\sum_{c\in\mathcal C} p(c\mid r)^2}
\;\le\;
\sqrt{m}\;\sqrt{\sum_{c\in\mathcal C} p(c\mid r)^2}.
\]
Using \(N_{\mathrm{eff}}(r)=\big(\sum_{c\in\mathcal C} p(c\mid r)^2\big)^{-1}\), we obtain
\[
\Pr_{C\sim p(\cdot\mid r)}[C\in S]
=\sum_{c\in S}p(c\mid r)
\;\le\;
\sqrt{\frac{m}{N_{\mathrm{eff}}(r)}}.
\]
Therefore,
\[
\Pr_{C\sim p(\cdot\mid r)}[C\notin S]
\;=\;
1-\Pr_{C\sim p(\cdot\mid r)}[C\in S]
\;\ge\;
1-\sqrt{\frac{m}{N_{\mathrm{eff}}(r)}}.
\]
\end{proof}

\begin{proof}[Proof of Theorem~\ref{thm:mixing_forgetting_compact}]
Fix route \(r\), step size \(\eta>0\), and a (possibly random) unit update direction \(\hat u\) with
\(\theta^+=\theta-\eta\hat u\).
For each \(c\in\mathcal C\), define the one-step change
\(\Delta_c := F_c(\theta^+)-F_c(\theta)\).

\paragraph{Step 1: a uniform lower bound from smoothness and bounded gradients.}
Since each \(F_c\) is \(L\)-smooth, for any vector \(v\) we have the quadratic lower bound
\[
F_c(\theta+v)\ \ge\ F_c(\theta)+\langle \nabla F_c(\theta),v\rangle-\frac{L}{2}\|v\|_2^2.
\]
Applying this with \(v=-\eta\hat u\) yields
\[
F_c(\theta-\eta\hat u)
\;\ge\;
F_c(\theta)-\eta\langle \nabla F_c(\theta),\hat u\rangle-\frac{L\eta^2}{2}.
\]
Thus,
\[
\Delta_c
\;\ge\;
-\eta\langle \nabla F_c(\theta),\hat u\rangle-\frac{L\eta^2}{2}.
\]
Using \(\|\hat u\|=1\) and the assumption \(\|\nabla F_c(\theta)\|\le G\),
\[
-\langle \nabla F_c(\theta),\hat u\rangle
\;\ge\;
-\|\nabla F_c(\theta)\|\,\|\hat u\|
\;\ge\;
-G,
\]
hence for every \(c\in\mathcal C\),
\begin{equation}
\label{eq:delta_uniform_lb}
\Delta_c \;\ge\; -\eta G-\frac{L\eta^2}{2}.
\end{equation}

\paragraph{Step 2: expected per-composition lower bound for \(c\notin S\).}
Let \(c\notin S\). By assumption,
\(\Pr(\Delta_c\ge \kappa)\ge \rho\),
where the probability is over the randomness defining \(\hat u\).
Decomposing by events and using \eqref{eq:delta_uniform_lb},
\[
\mathbb{E}[\Delta_c]
=
\mathbb{E}[\Delta_c\,\mathbf{1}\{\Delta_c\ge \kappa\}]
+\mathbb{E}[\Delta_c\,\mathbf{1}\{\Delta_c< \kappa\}]
\;\ge\;
\kappa\,\Pr(\Delta_c\ge \kappa)
+\Big(-\eta G-\frac{L\eta^2}{2}\Big)\Pr(\Delta_c<\kappa).
\]
Therefore,
\begin{equation}
\label{eq:delta_c_outsideS}
\mathbb{E}[\Delta_c]
\;\ge\;
\rho\kappa-(1-\rho)\Big(\eta G+\frac{L\eta^2}{2}\Big),
\qquad \forall\,c\notin S.
\end{equation}

\paragraph{Step 3: lift to the route mixture using Lemma~\ref{lem:neff_mass}.}
By definition,
\[
F_r(\theta)=\mathbb{E}_{C\sim p(\cdot\mid r)}[F_C(\theta)]
\quad\Rightarrow\quad
F_r(\theta^+)-F_r(\theta)=\mathbb{E}_{C\sim p(\cdot\mid r)}[\Delta_C].
\]
We take expectation over the randomness defining \(\hat u\).
Since \(C\sim p(\cdot\mid r)\) is drawn from the (fixed) old-task mixture on route \(r\), it is independent of
the new-task randomness that defines \(\hat u\); hence we may apply iterated expectation:
\[
\mathbb{E}[F_r(\theta^+)-F_r(\theta)]
=
\mathbb{E}_{C\sim p(\cdot\mid r)}\big[\mathbb{E}[\Delta_C\mid C]\big]
=
\sum_{c\in\mathcal C} p(c\mid r)\,\mathbb{E}[\Delta_c].
\]
Dropping the contributions from \(c\in S\) and lower bounding the remainder by the worst case over \(c\notin S\),
\[
\mathbb{E}[F_r(\theta^+)-F_r(\theta)]
\;\ge\;
\sum_{c\notin S} p(c\mid r)\,\inf_{c\notin S}\mathbb{E}[\Delta_c]
=
\Pr_{C\sim p(\cdot\mid r)}[C\notin S]\cdot \inf_{c\notin S}\mathbb{E}[\Delta_c].
\]
Combining with \eqref{eq:delta_c_outsideS} yields
\[
\mathbb{E}[F_r(\theta^+)-F_r(\theta)]
\;\ge\;
\Pr_{C\sim p(\cdot\mid r)}[C\notin S]\,
\Big(\rho\kappa-(1-\rho)\big(\eta G+\tfrac{L\eta^2}{2}\big)\Big).
\]
Let \(a:=\sqrt{m/N_{\mathrm{eff}}(r)}\).
If \(a\ge 1\) then \((1-a)_+=0\) and the bound below is trivial.
Otherwise, Lemma~\ref{lem:neff_mass} gives
\(
\Pr_{C\sim p(\cdot\mid r)}[C\notin S]\ge 1-a=(1-a)_+.
\)
Thus
\[
\mathbb{E}[F_r(\theta^+)-F_r(\theta)]
\;\ge\;
(1-a)_+\Big(\rho\kappa-(1-\rho)\big(\eta G+\tfrac{L\eta^2}{2}\big)\Big),
\]
which is the claimed inequality.

\paragraph{Monotonicity in \(N_{\mathrm{eff}}(r)\) and positivity.}
Since \(a=\sqrt{m/N_{\mathrm{eff}}(r)}\) decreases as \(N_{\mathrm{eff}}(r)\) increases, the factor \((1-a)_+\)
is nondecreasing in \(N_{\mathrm{eff}}(r)\).
Therefore, whenever
\[
B:=\rho\kappa-(1-\rho)\Big(\eta G+\frac{L\eta^2}{2}\Big)\ \ge\ 0,
\]
the lower bound \((1-a)_+B\) is nondecreasing in \(N_{\mathrm{eff}}(r)\).
Moreover, if \(B>0\), the lower bound becomes strictly positive as soon as \((1-a)_+>0\), i.e., whenever
\(N_{\mathrm{eff}}(r)>m\).
\end{proof}

\subsection{Experiment Details}
\label{app:exp_detail}
\subsubsection{Datasets}
\paragraph{C-STANCE.}
C-STANCE is a large-scale Chinese benchmark for zero-shot stance detection, where each example pairs a microblog post with a target and the model predicts a stance label (favor/against/neutral) toward that target, including targets not observed during training. In TRACE, C-STANCE is casted as a 3-way classification task and evaluate with accuracy.

\paragraph{FOMC.}
The FOMC dataset is constructed from Federal Open Market Committee communications and is annotated for monetary-policy stance, enabling a hawkish--dovish style classification task. TRACE uses this dataset as a domain-specific stance classification problem (English) and reports accuracy.

\paragraph{MeetingBank.}
MeetingBank is a meeting summarization dataset built from city council meetings, providing long-form transcripts together with professionally written minutes and aligned segment-level supervision via a divide-and-conquer alignment procedure. TRACE uses MeetingBank as an abstractive summarization task and evaluates generation quality with ROUGE-L.

\paragraph{Py150.}
Py150 is a corpus of 150,000 Python source files mined from GitHub under permissive licensing filters and quality controls. It is widely used as a standard benchmark for Python code completion. TRACE uses Py150 as a code generation/completion task and evaluates with the fuzzing accuracy.

\paragraph{ScienceQA.}
ScienceQA is a science question answering benchmark containing multiple-choice questions collected from school science curricula. TRACE uses ScienceQA as a discrete-answer QA task and reports accuracy.

\paragraph{NumGLUE-cm.}
NumGLUE is a suite of arithmetic-centric reasoning tasks. TRACE includes the NumGLUE \emph{Commonsense + Arithmetic} task (cm), which requires combining commonsense quantitative facts with simple arithmetic operations. TRACE evaluates this task using accuracy.

\paragraph{NumGLUE-ds.}
TRACE also includes the NumGLUE \emph{Domain Specific + Arithmetic} task (ds), which requires domain knowledge together with arithmetic reasoning. As with other discrete-answer tasks in TRACE, performance is reported using accuracy.

\paragraph{20Minuten.}
20Minuten is a German dataset collected from the Swiss news outlet \emph{20 Minuten}, pairing full news articles with simplified rewrites/summaries to support document-level text simplification. TRACE uses 20Minuten as a German generation task and evaluates outputs using SARI.

\subsubsection{Baselines}
\label{app:baselines}

\paragraph{SeqLoRA.}
SeqLoRA is an adapter-based continual learning baseline that equips the pretrained model with a \emph{single shared} set of LoRA adapters. The same LoRA parameters are trained sequentially across all tasks in the stream, while the pretrained backbone remains frozen.

\paragraph{LoRAMoE \cite{dou2024loramoe}.}
LoRAMoE replaces selected linear layers with a shared \emph{common} linear map plus \(K\) LoRA-style low-rank residual experts and a learned gate. Routing is computed from a single router input representation, and each token activates a sparse subset of experts; the output is the common branch plus the probability-weighted residual from the selected experts. During continual learning, we freeze the pretrained backbone and train only the residual experts and gate parameters. We match LoRAMoE to MH-MoE in activated parameter count per token.

\paragraph{EWC \cite{kirkpatrick2017overcoming}.}
Elastic Weight Consolidation (EWC) is a regularization-based baseline that estimates parameter importance after each task (via a diagonal Fisher approximation) and penalizes changes to high-importance parameters on subsequent tasks. We apply EWC on top of the SeqLoRA setup.

\paragraph{GEM \cite{lopez2017gradient}.}
Gradient Episodic Memory (GEM) maintains a small episodic memory of samples from previous tasks and adjusts each update so it does not increase the loss on the stored memory samples. We apply GEM on top of the SeqLoRA setup.

\paragraph{O-LoRA \cite{wang2023orthogonal}.}
O-LoRA is a LoRA-based continual learning baseline that adds an orthogonality constraint/regularizer to reduce interference between sequential updates. In our implementation, we freeze the pretrained backbone and train only the LoRA parameters, following the original setup.

\subsubsection{Metrics}
Let $f_i(\mathbf{w}_j)$ denote the prediction performance on task $i$ (e.g., accuracy, SARI, ROUGE-L) when evaluated using the model parameters after learning through task $j$, denoted by $\mathbf{w}_j$.

\paragraph{Overall Performance (OP).}
After training up to task $n$, we define the overall performance as the average score over all tasks seen so far:
\begin{equation}
\mathrm{OP}_n \;\triangleq\; \frac{1}{n}\sum_{i=1}^{n} f_i(\mathbf{w}_n).
\end{equation}
 $\mathrm{OP}_n$ measures how good the final model $\mathbf{w}_n$ is on the full set of tasks $\{1,\dots,n\}$. Higher $\mathrm{OP}_n$ means better overall learning quality.

\paragraph{Backward Transfer (BWT).}
We quantify forgetting via backward transfer, defined as the average performance drop on earlier tasks after learning all tasks:
\begin{equation}
\mathrm{BWT}_n \;\triangleq\; \frac{1}{n}\sum_{i=1}^{n}\Bigl(f_i(\mathbf{w}_i) - f_i(\mathbf{w}_n)\Bigr).
\end{equation}
For each task $i$, the term $f_i(\mathbf{w}_i)$ is the model's performance right after learning task $i$, while $f_i(\mathbf{w}_n)$ is its performance on task $i$ after subsequently learning tasks $i\!+\!1,\dots,n$. Thus, $\mathrm{BWT}_n$ measures \emph{retention}: larger values indicate more forgetting (bigger degradation on old tasks), while values closer to $0$ indicate better preservation of earlier-task performance.

\subsubsection{Implementation Details}
For Qwen3-0.6B, we train each task for 2 epochs with a learning rate of $1\times 10^{-4}$ under a cosine learning-rate schedule. 
For Qwen3-8B, we use the same learning rate and schedule, but train each task for 5 epochs. 
In all experiments, we fix the sequence length to 2048 and use a batch size of 10. 
We optimize with AdamW ($\texttt{weight\_decay}=0.01$, $\beta_1=0.9$, $\beta_2=0.95$, $\epsilon=10^{-6}$). 
We attach experts to the linear modules in the MLP layers and tune the LoRA rank so that the number of activated parameters is comparable across methods. 
For LoRAMoE, we use 4 experts per layer with top-1 routing. 
For MH-MoE, each head has 4 private experts and performs top-1 routing within its head-specific expert set.

Unless stated otherwise, we use the standard TRACE task order: C-STANCE $\rightarrow$ FOMC $\rightarrow$ MeetingBank $\rightarrow$ Py150 $\rightarrow$ ScienceQA $\rightarrow$ NumGLUE-cm $\rightarrow$ NumGLUE-ds $\rightarrow$ 20Minuten.
For the task-order ablation, we evaluate three alternative sequences:
\textbf{Order 1}: FOMC $\rightarrow$ C-STANCE $\rightarrow$ ScienceQA $\rightarrow$ Py150 $\rightarrow$ MeetingBank $\rightarrow$ NumGLUE-cm $\rightarrow$ NumGLUE-ds $\rightarrow$ 20Minuten;
\textbf{Order 2}: FOMC $\rightarrow$ C-STANCE $\rightarrow$ ScienceQA $\rightarrow$ Py150 $\rightarrow$ NumGLUE-ds $\rightarrow$ NumGLUE-cm $\rightarrow$ MeetingBank $\rightarrow$ 20Minuten;
\textbf{Order 3}: FOMC $\rightarrow$ C-STANCE $\rightarrow$ ScienceQA $\rightarrow$ Py150 $\rightarrow$ MeetingBank $\rightarrow$ NumGLUE-ds $\rightarrow$ NumGLUE-cm $\rightarrow$ 20Minuten.

\subsection{Additional Experiment results}

\subsubsection{Head-structured Features. (Section \ref{sec:attn_output_multifeature})}

\begin{figure*}[ht]
  \centering

  \begin{subfigure}[t]{0.38\linewidth}
    \centering
    \includegraphics[width=\linewidth]{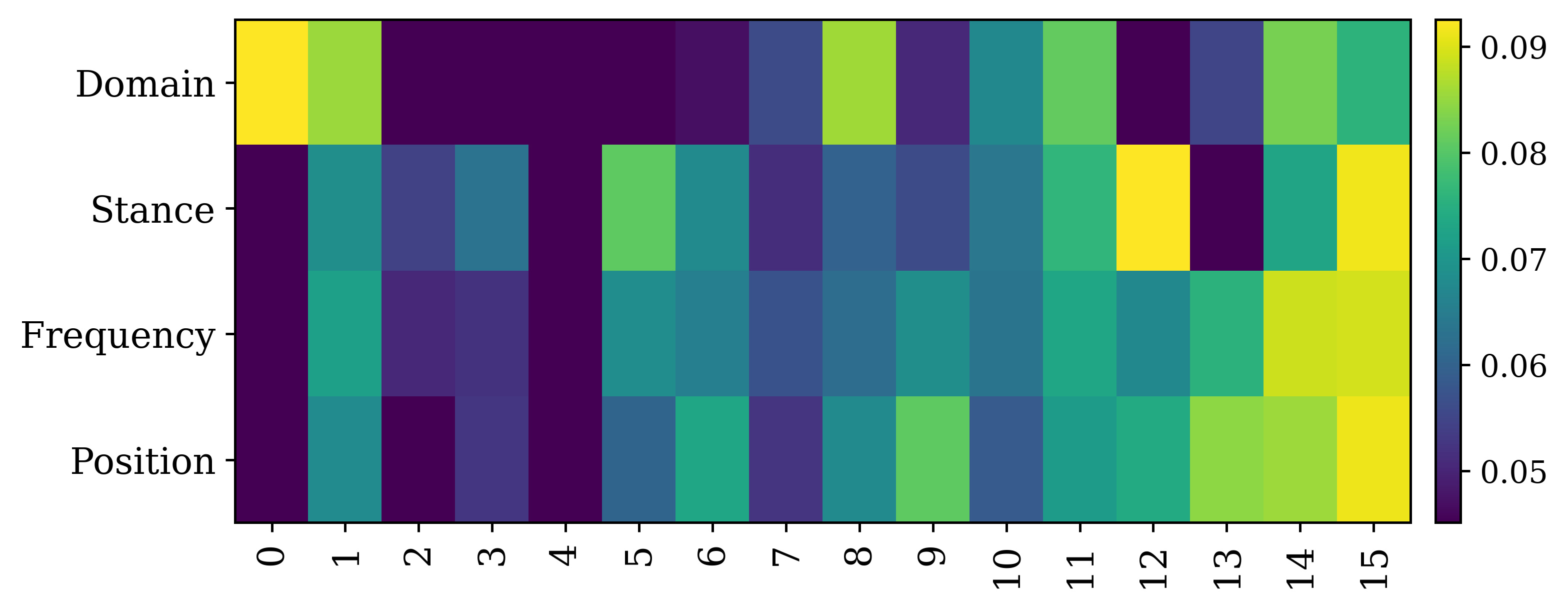}
    \caption{Layer 0}
  \end{subfigure}
  \begin{subfigure}[t]{0.38\linewidth}
    \centering
    \includegraphics[width=\linewidth]{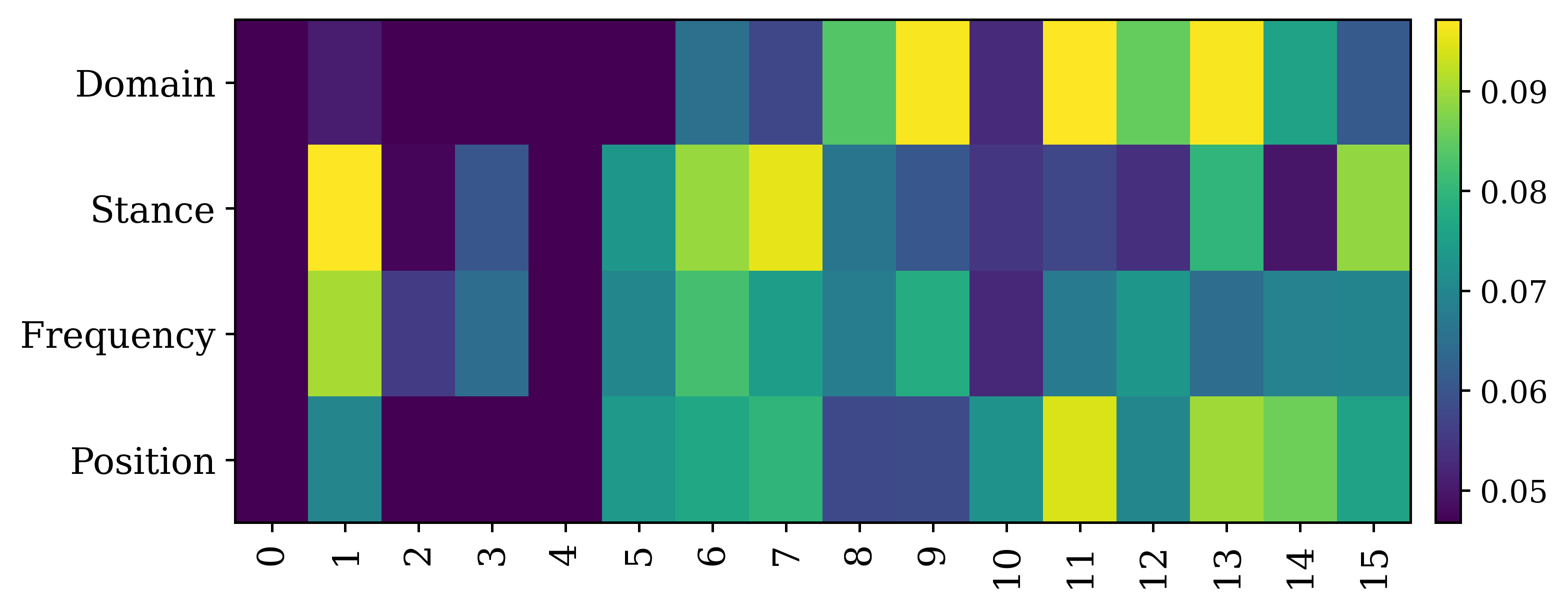}
    \caption{Layer 2}
  \end{subfigure}
  \begin{subfigure}[t]{0.38\linewidth}
    \centering
    \includegraphics[width=\linewidth]{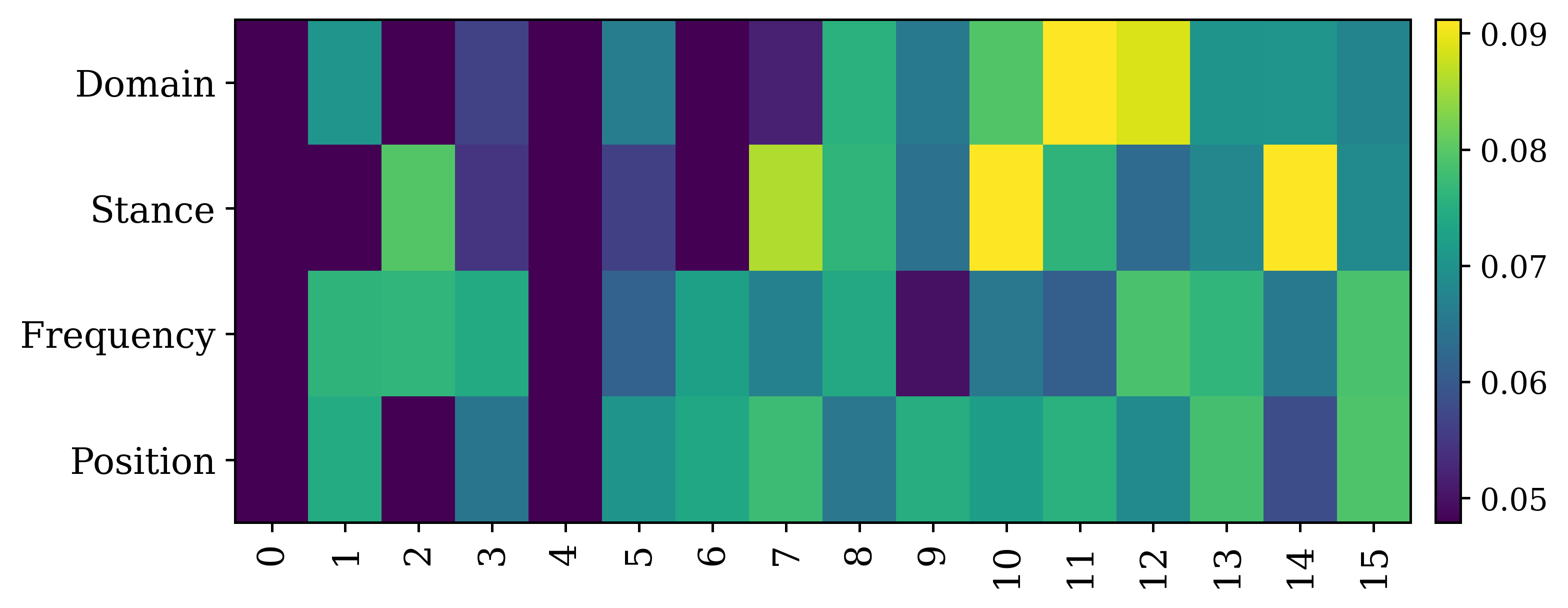}
    \caption{Layer 4}
  \end{subfigure}
  \begin{subfigure}[t]{0.38\linewidth}
    \centering
    \includegraphics[width=\linewidth]{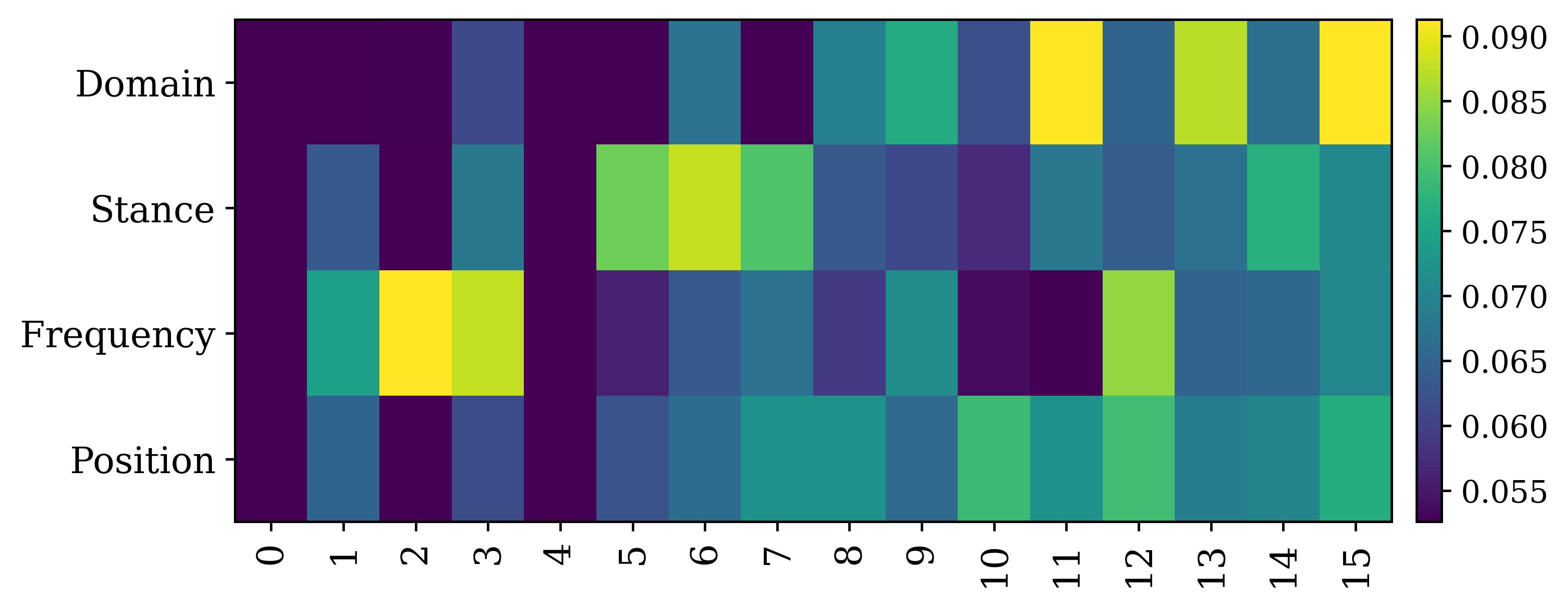}
    \caption{Layer 6}
  \end{subfigure}
  \begin{subfigure}[t]{0.38\linewidth}
    \centering
    \includegraphics[width=\linewidth]{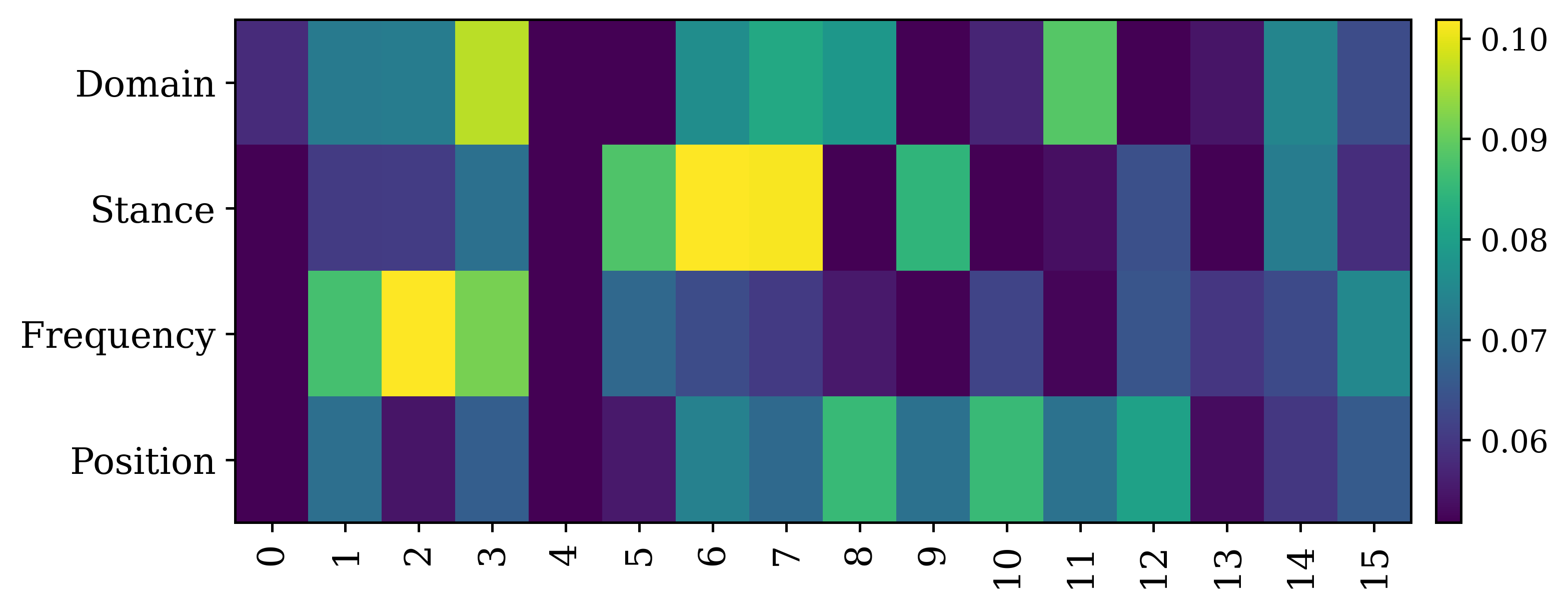}
    \caption{Layer 8}
  \end{subfigure}
  \begin{subfigure}[t]{0.38\linewidth}
    \centering
    \includegraphics[width=\linewidth]{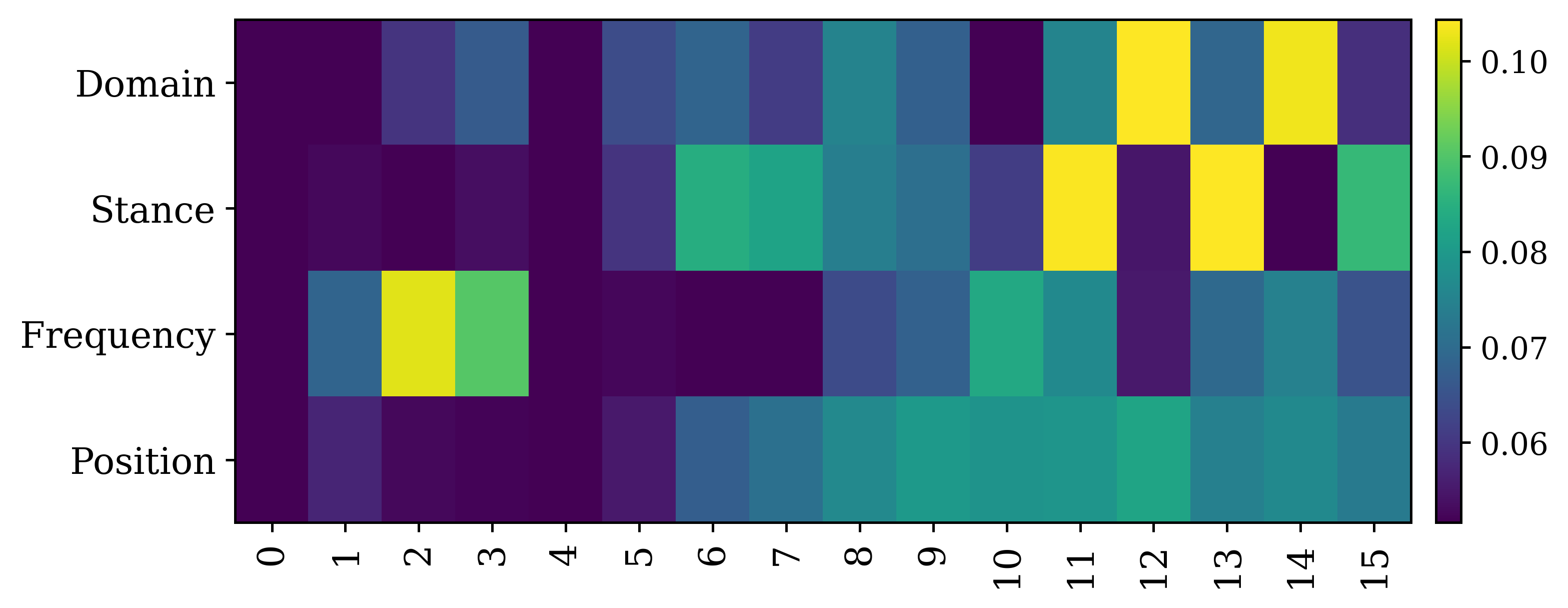}
    \caption{Layer 10}
  \end{subfigure}
  \begin{subfigure}[t]{0.38\linewidth}
    \centering
    \includegraphics[width=\linewidth]{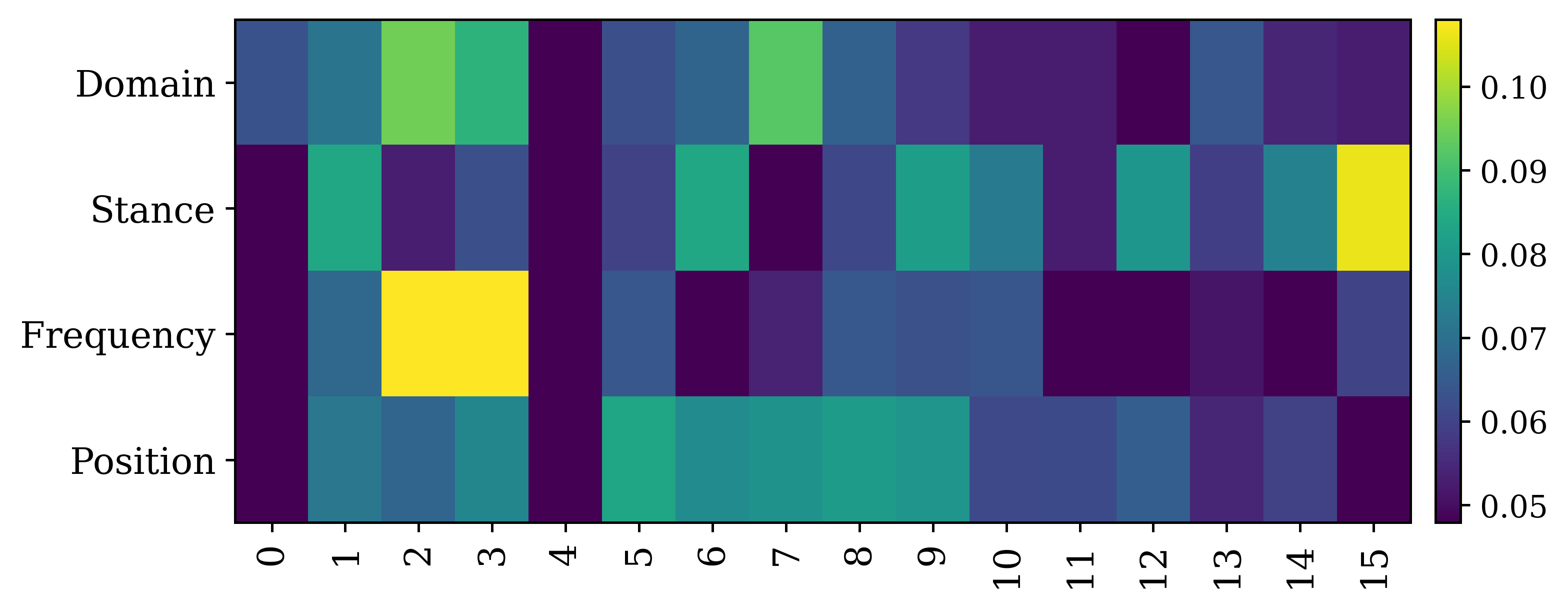}
    \caption{Layer 12}
  \end{subfigure}
  \begin{subfigure}[t]{0.38\linewidth}
    \centering
    \includegraphics[width=\linewidth]{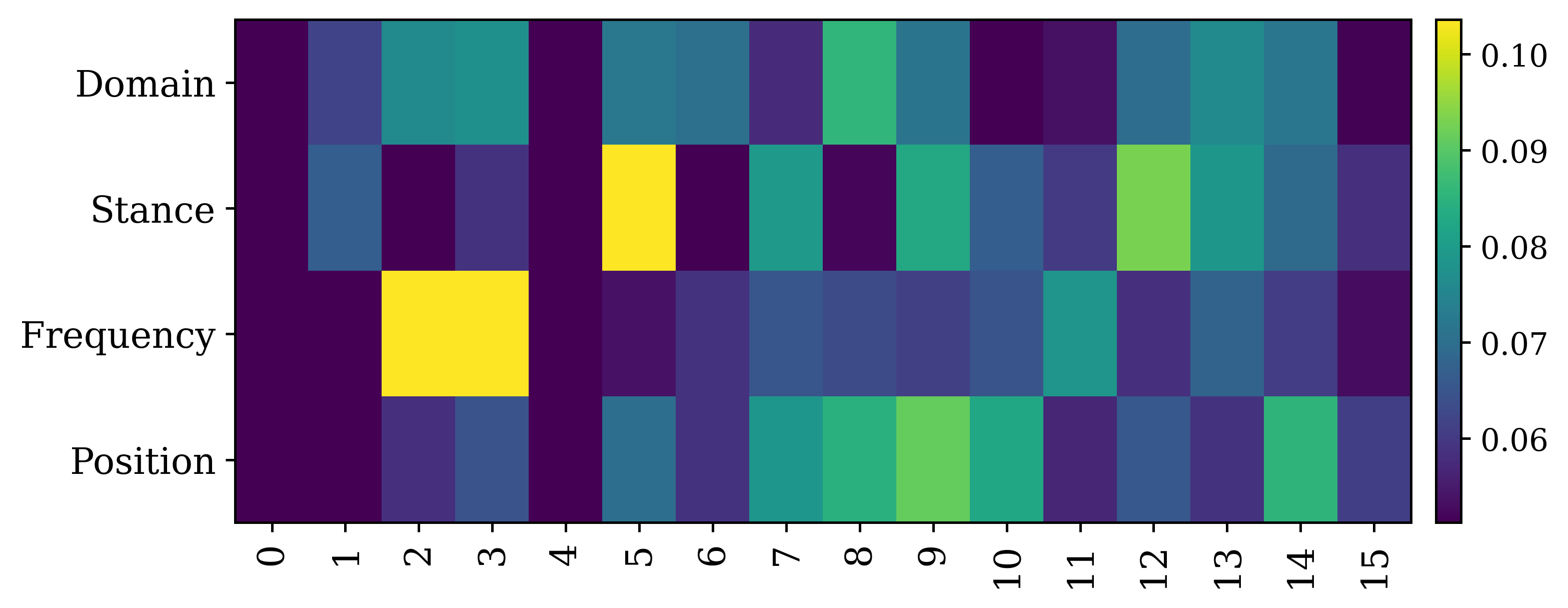}
    \caption{Layer 14}
  \end{subfigure}
  \begin{subfigure}[t]{0.38\linewidth}
    \centering
    \includegraphics[width=\linewidth]{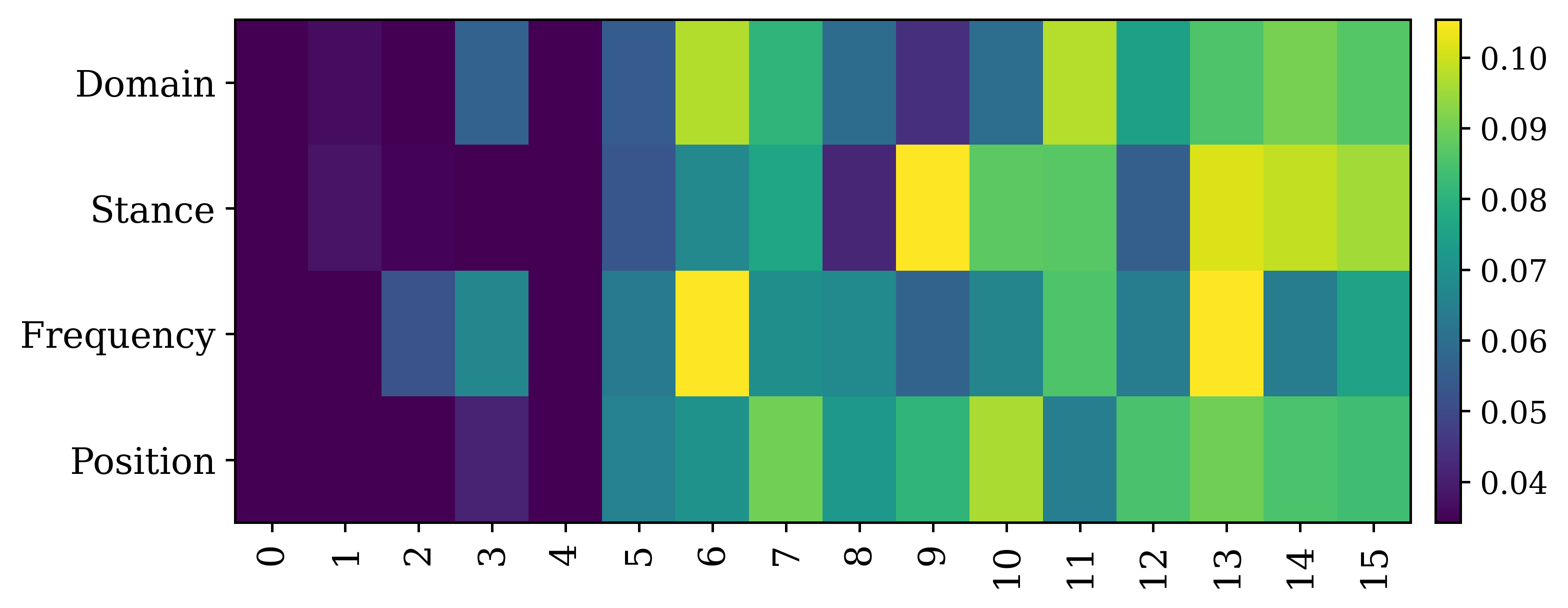}
    \caption{Layer 16}
  \end{subfigure}
  \begin{subfigure}[t]{0.38\linewidth}
    \centering
    \includegraphics[width=\linewidth]{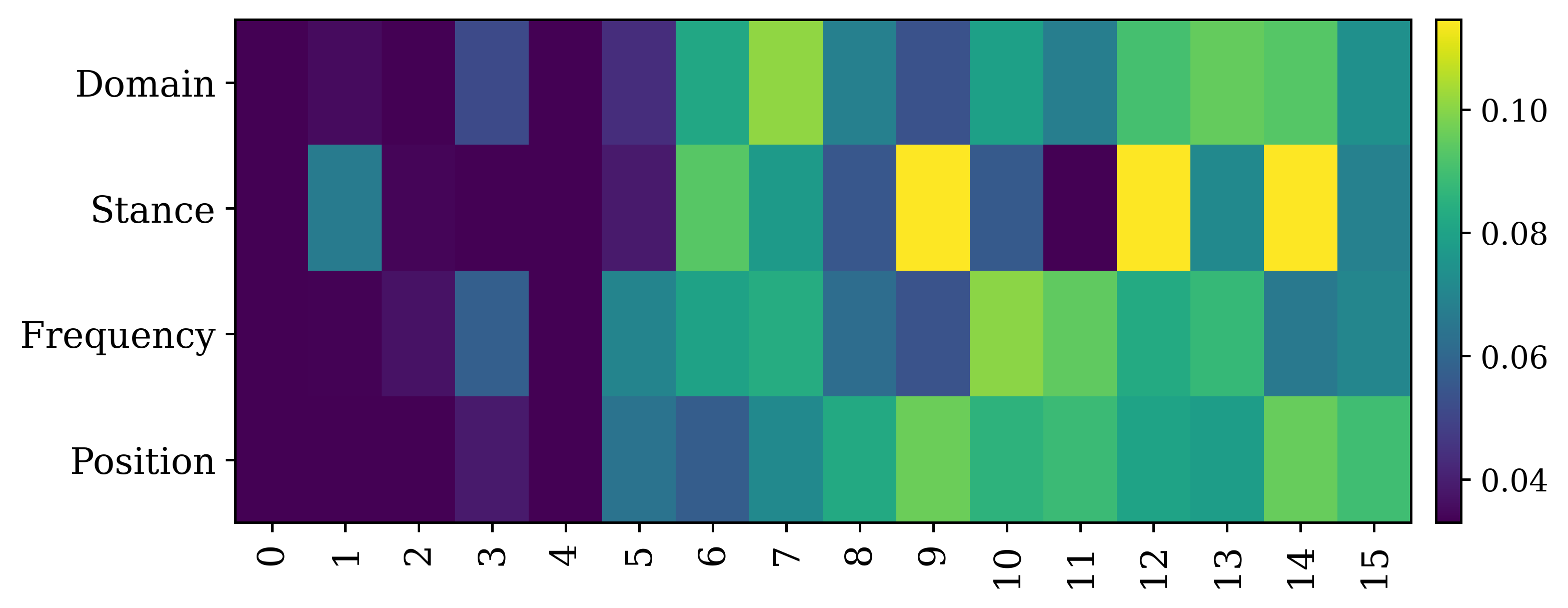}
    \caption{Layer 18}
  \end{subfigure}
  \begin{subfigure}[t]{0.38\linewidth}
    \centering
    \includegraphics[width=\linewidth]{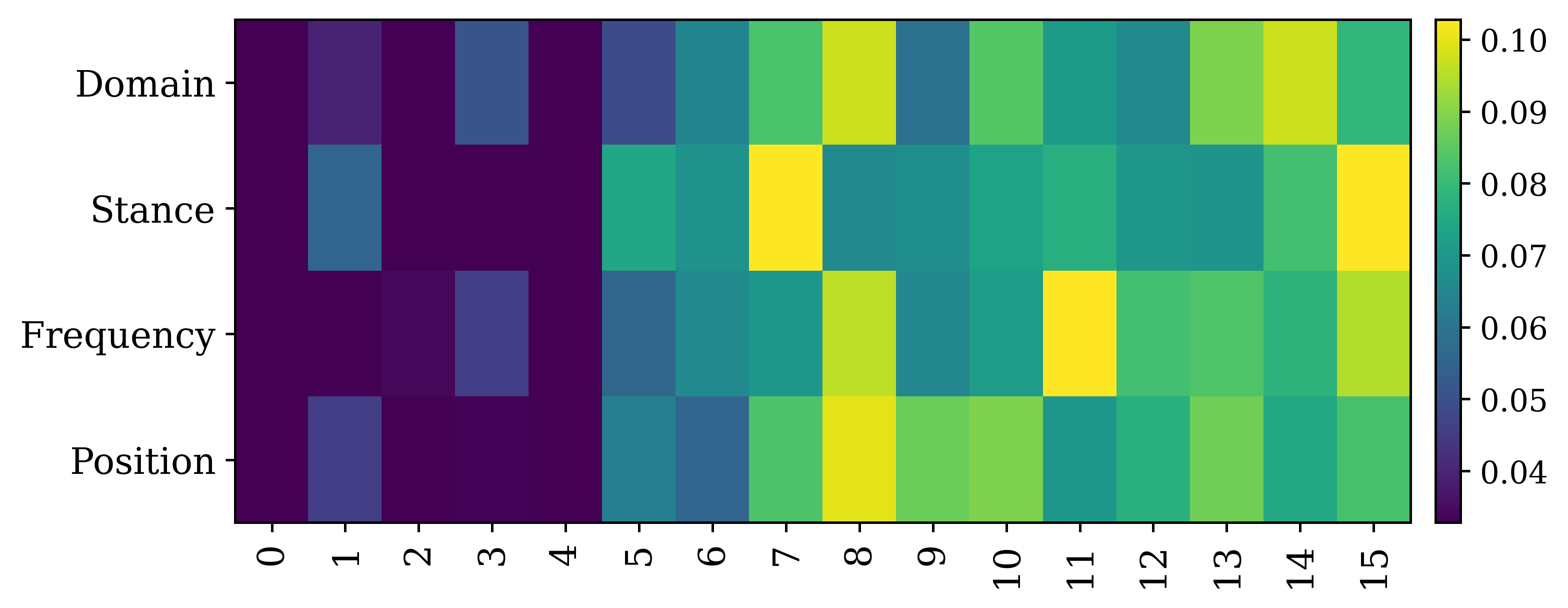}
    \caption{Layer 20}
  \end{subfigure}
  \begin{subfigure}[t]{0.38\linewidth}
    \centering
    \includegraphics[width=\linewidth]{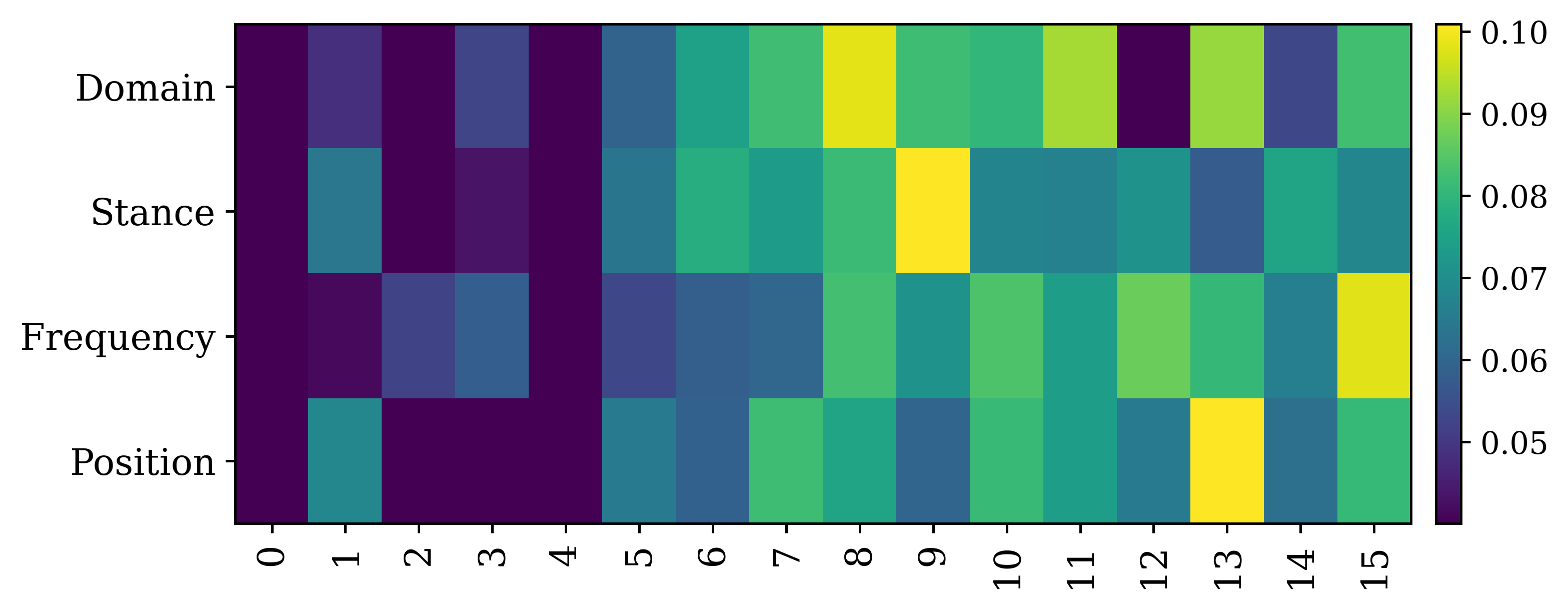}
    \caption{Layer 22}
  \end{subfigure}
  \begin{subfigure}[t]{0.38\linewidth}
    \centering
    \includegraphics[width=\linewidth]{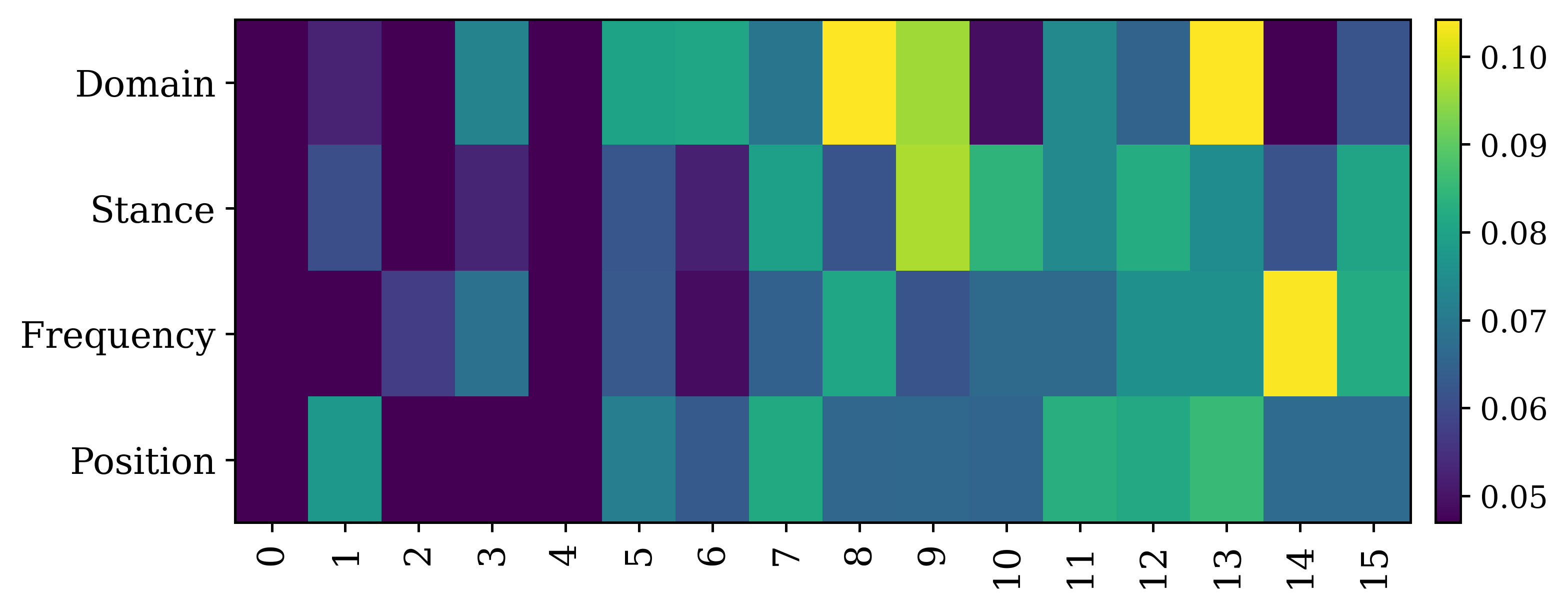}
    \caption{Layer 24}
  \end{subfigure}
  \begin{subfigure}[t]{0.38\linewidth}
    \centering
    \includegraphics[width=\linewidth]{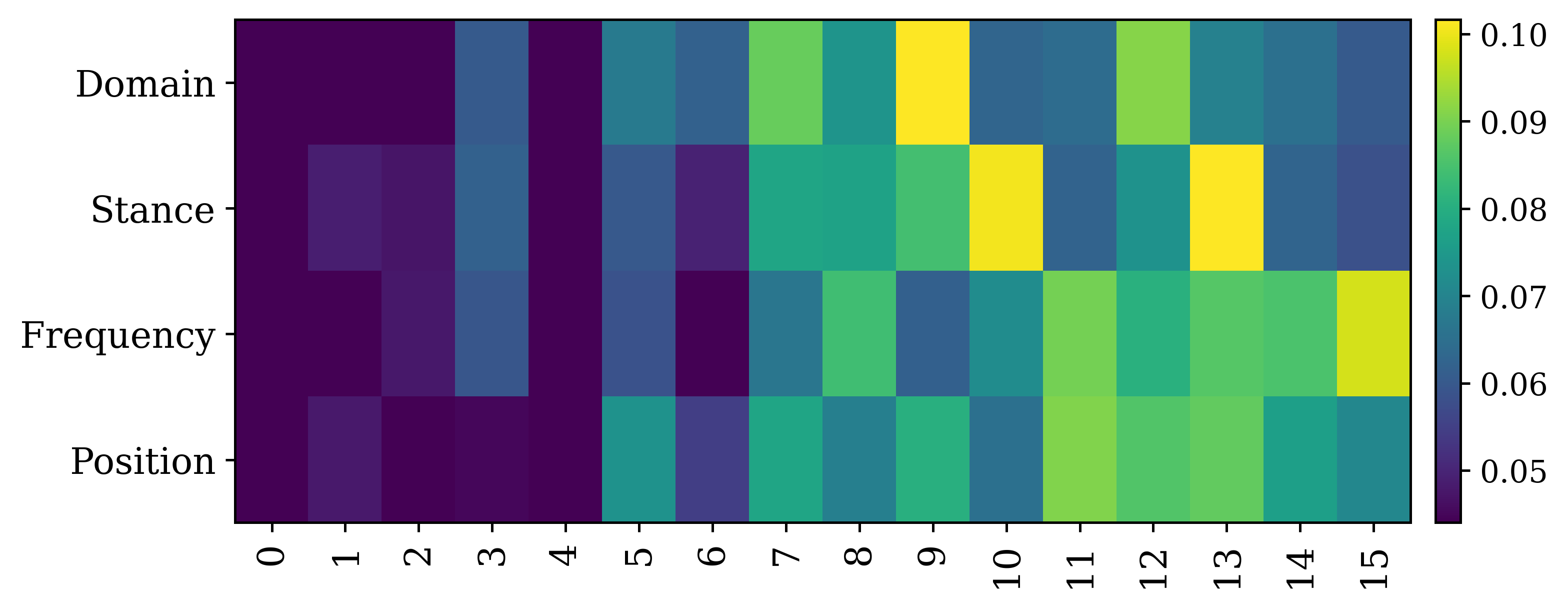}
    \caption{Layer 26}
  \end{subfigure}
  
  \caption{\textbf{Feature signals are head-structured across model layers.}}

  \label{fig:head_all}
\end{figure*}

\subsubsection{Within-composition coherence vs.\ cross-composition weak alignment. (Section \ref{sec:comp_grad})}

\begin{figure}[ht]
  \centering

  \begin{subfigure}[t]{0.335\linewidth}
    \centering
    \includegraphics[width=\linewidth]{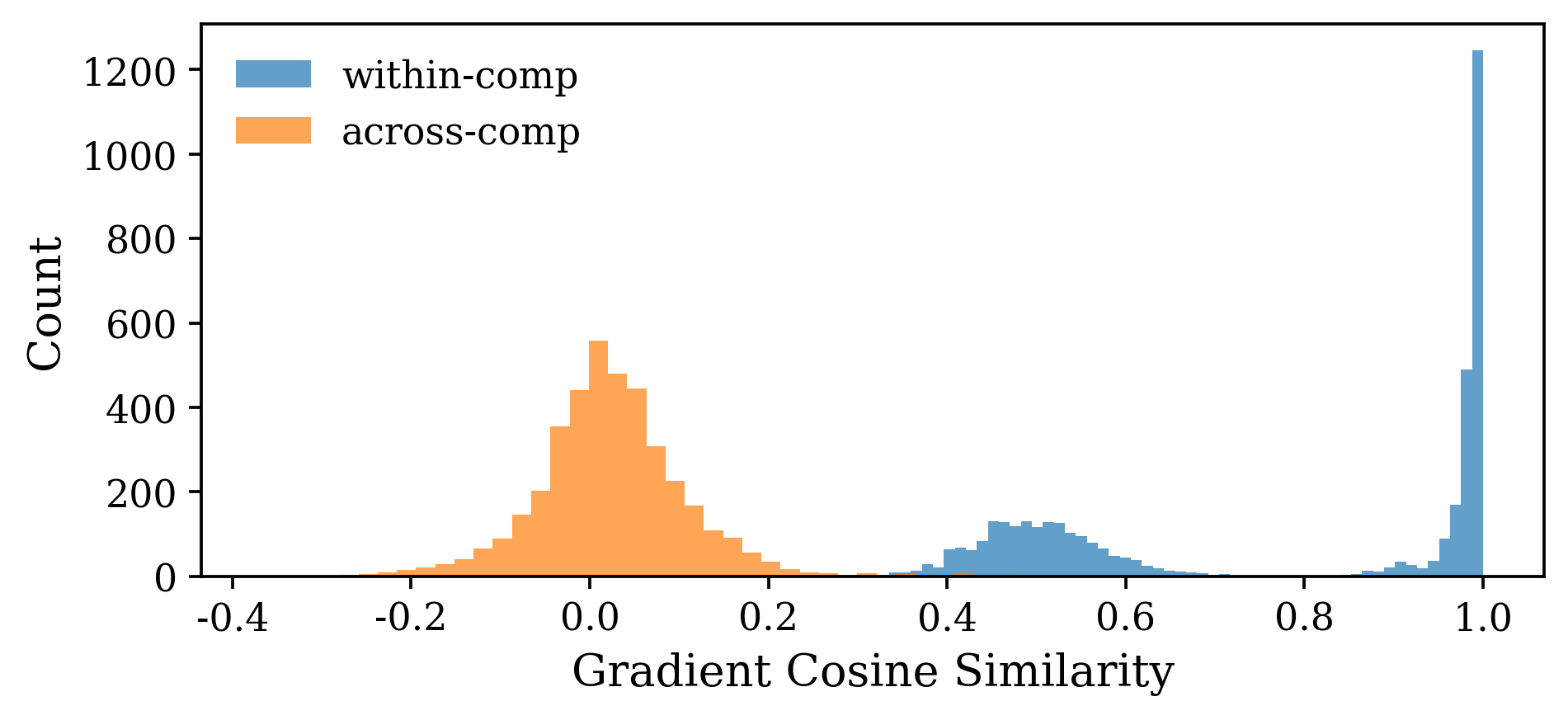}
    \caption{Layer 0}
  \end{subfigure}
  \begin{subfigure}[t]{0.335\linewidth}
    \centering
    \includegraphics[width=\linewidth]{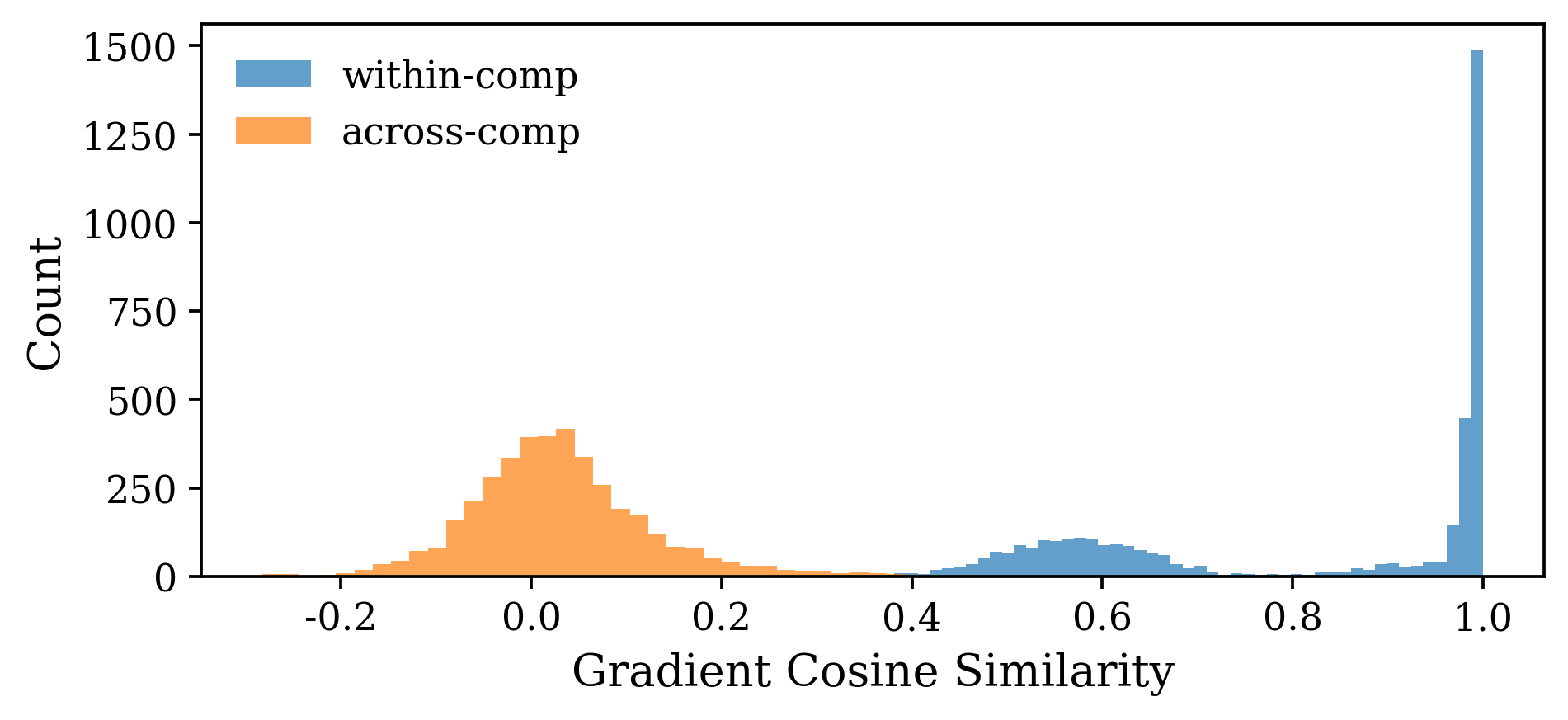}
    \caption{Layer 2}
  \end{subfigure}
  \begin{subfigure}[t]{0.335\linewidth}
    \centering
    \includegraphics[width=\linewidth]{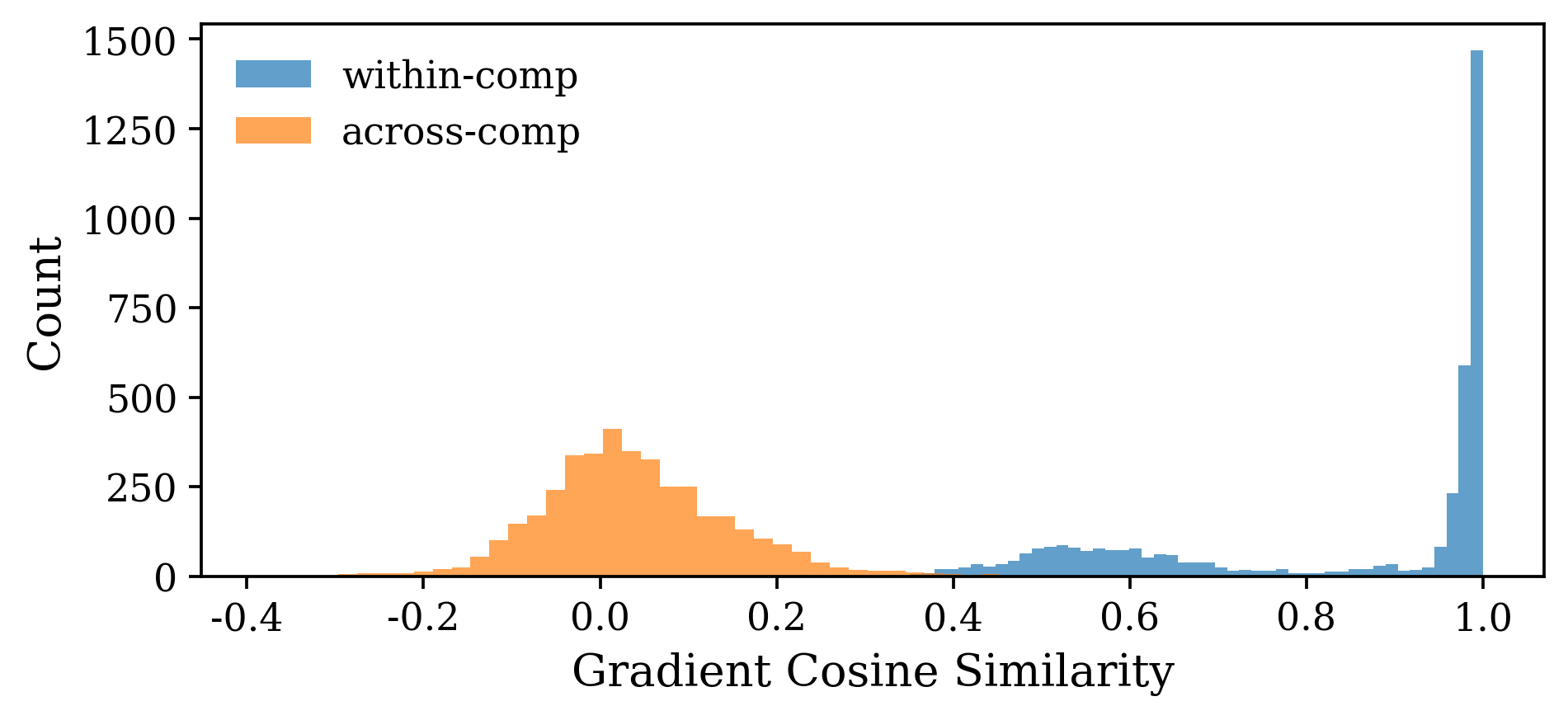}
    \caption{Layer 4}
  \end{subfigure}
  \begin{subfigure}[t]{0.335\linewidth}
    \centering
    \includegraphics[width=\linewidth]{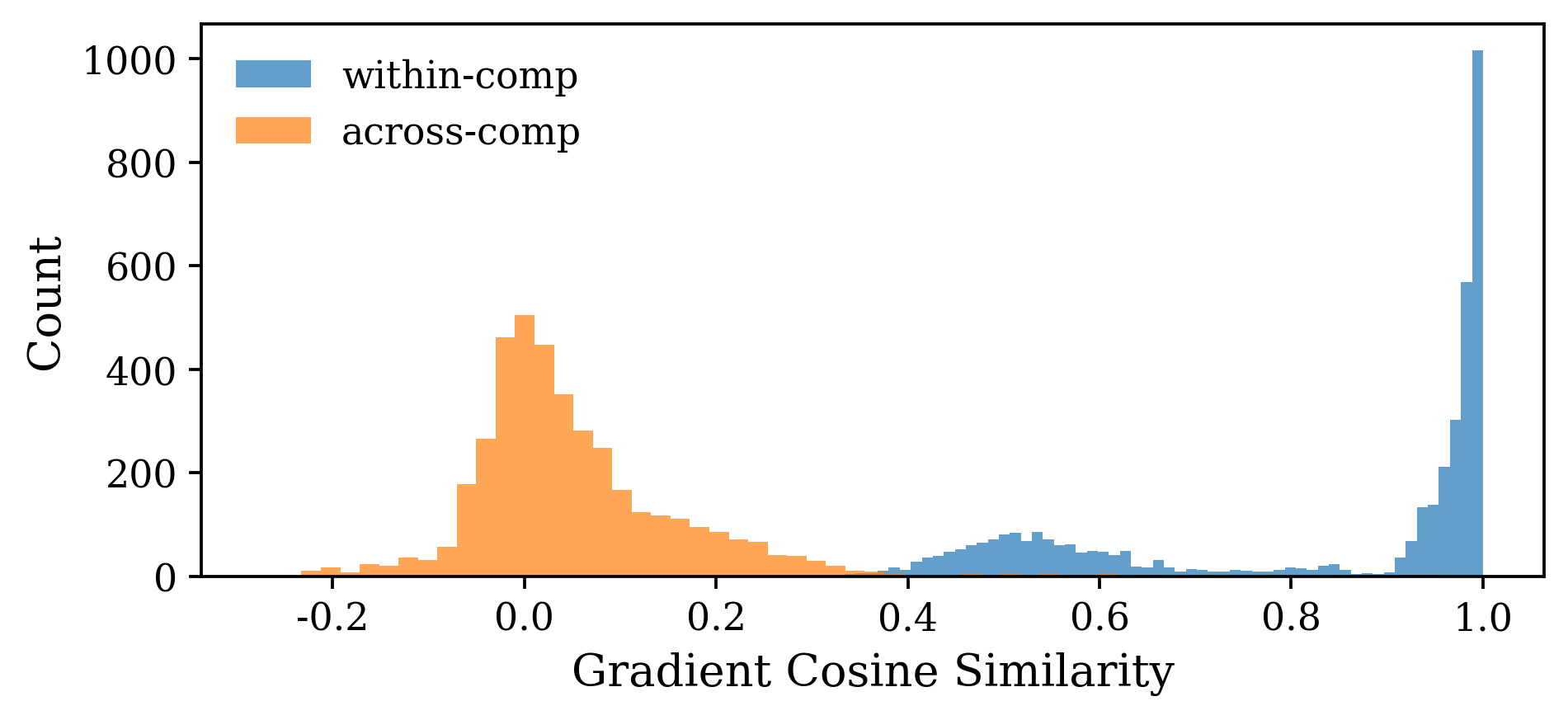}
    \caption{Layer 6}
  \end{subfigure}
  \begin{subfigure}[t]{0.335\linewidth}
    \centering
    \includegraphics[width=\linewidth]{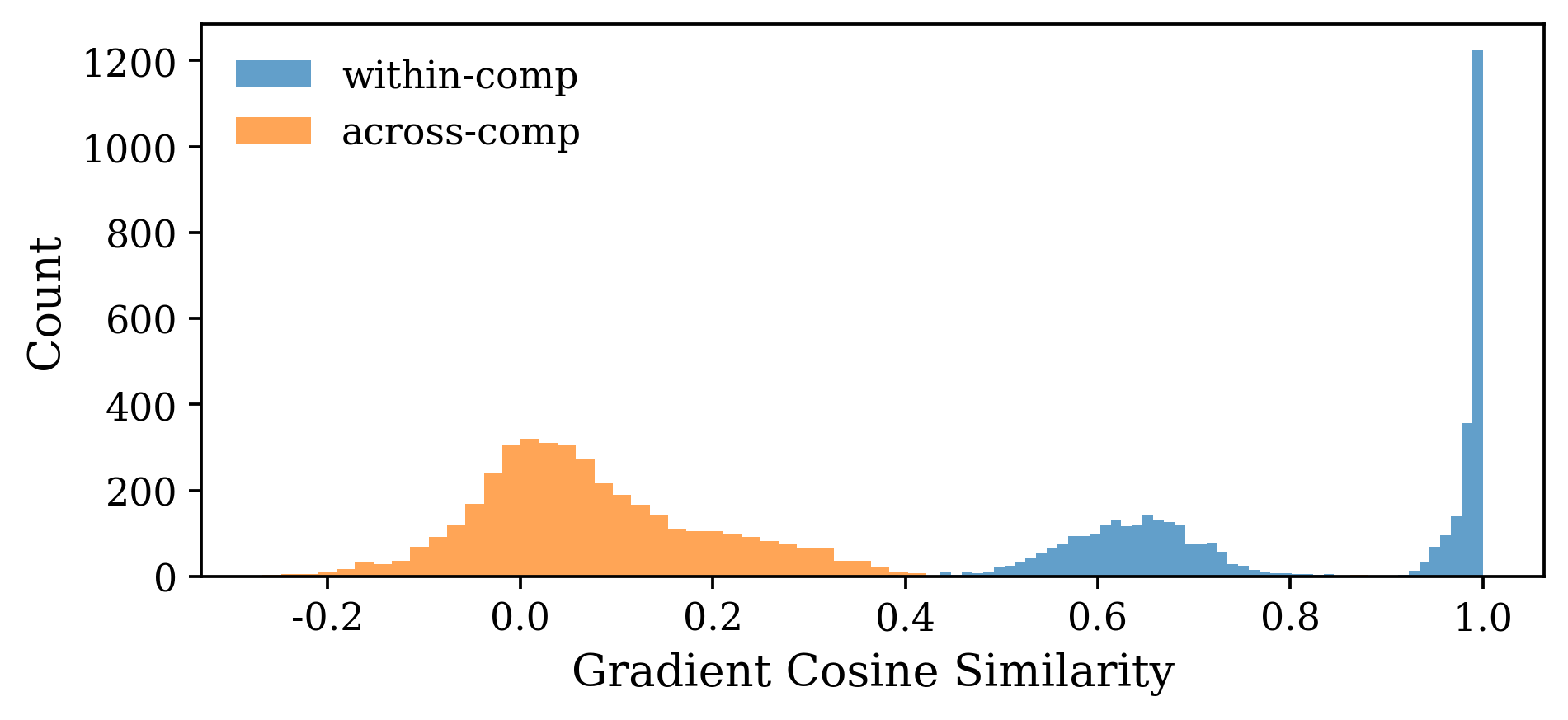}
    \caption{Layer 8}
  \end{subfigure}
  \begin{subfigure}[t]{0.335\linewidth}
    \centering
    \includegraphics[width=\linewidth]{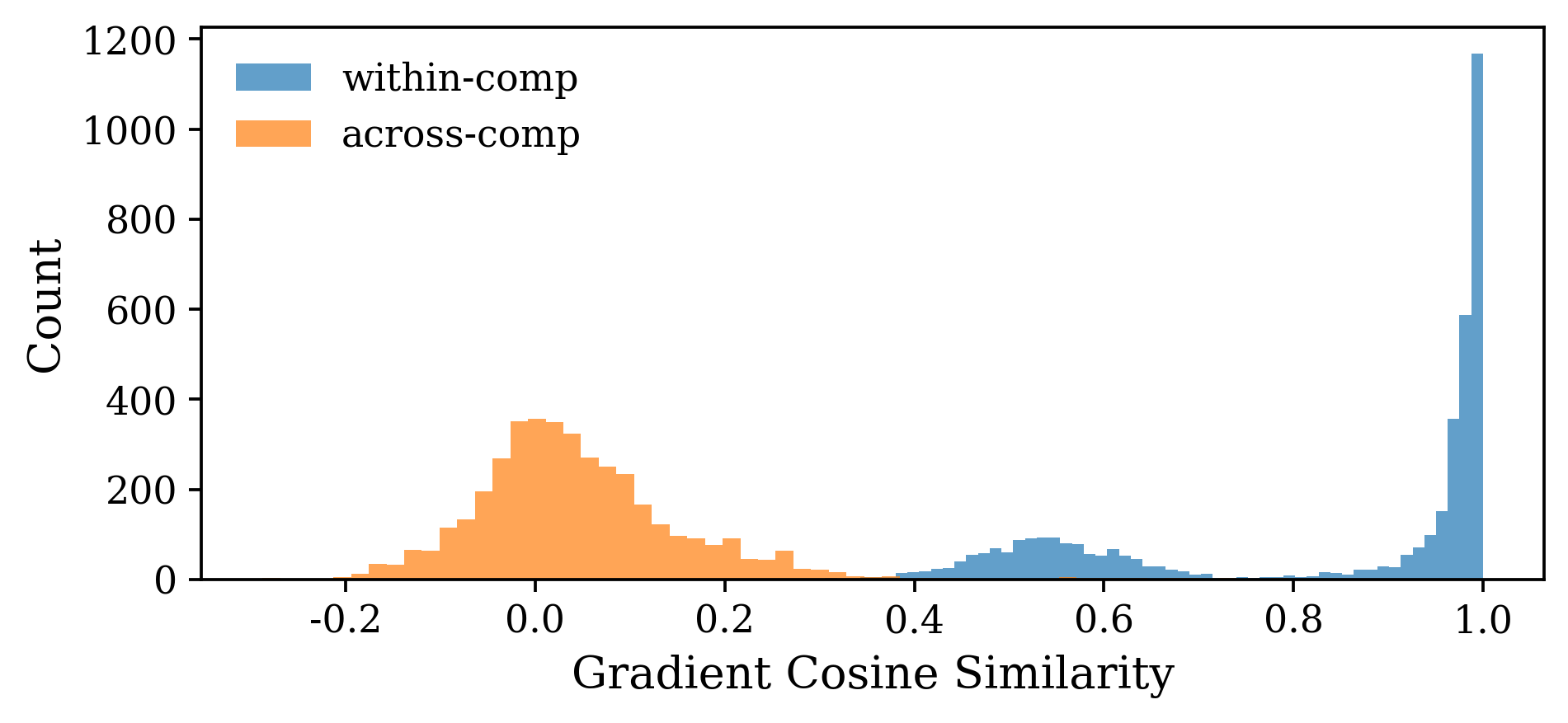}
    \caption{Layer 10}
  \end{subfigure}
  \begin{subfigure}[t]{0.335\linewidth}
    \centering
    \includegraphics[width=\linewidth]{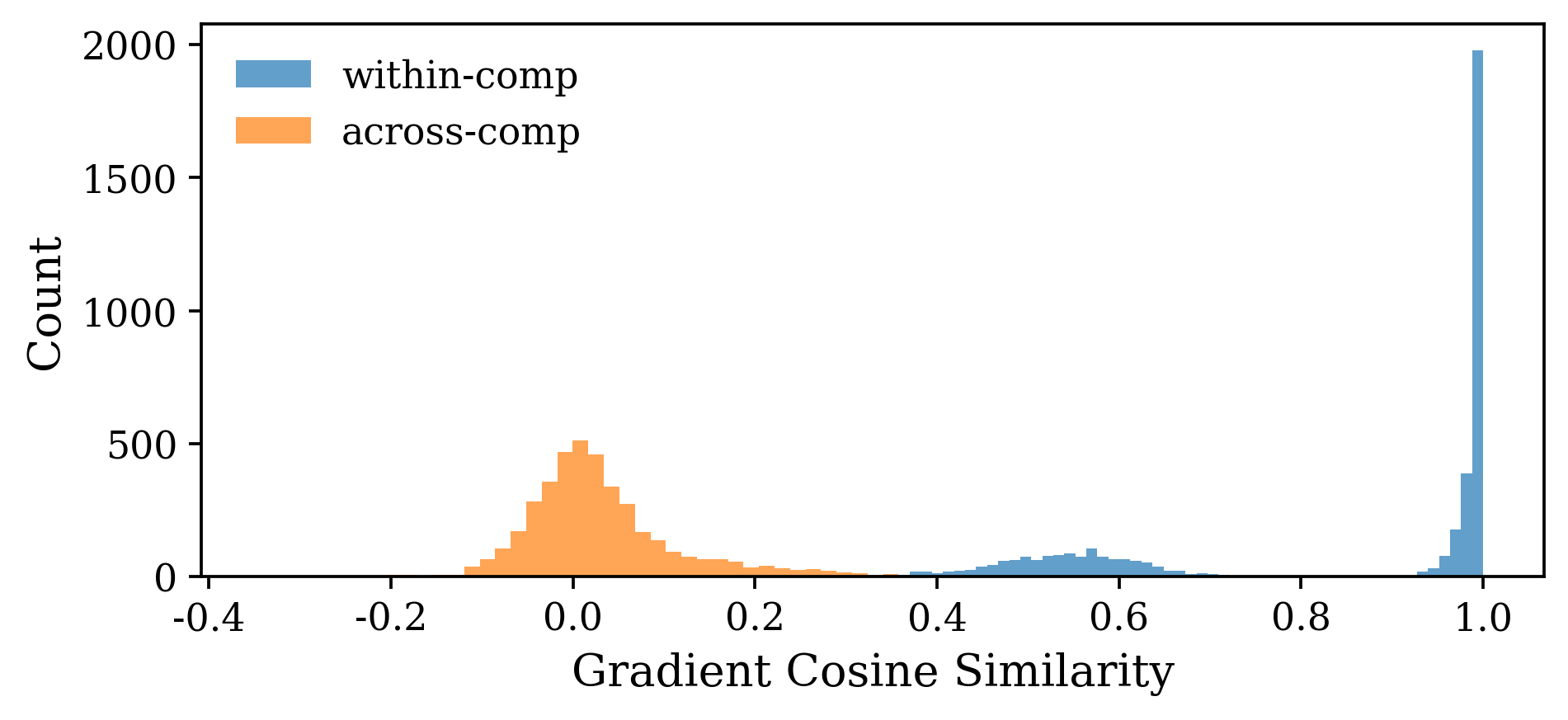}
    \caption{Layer 12}
  \end{subfigure}
  \begin{subfigure}[t]{0.335\linewidth}
    \centering
    \includegraphics[width=\linewidth]{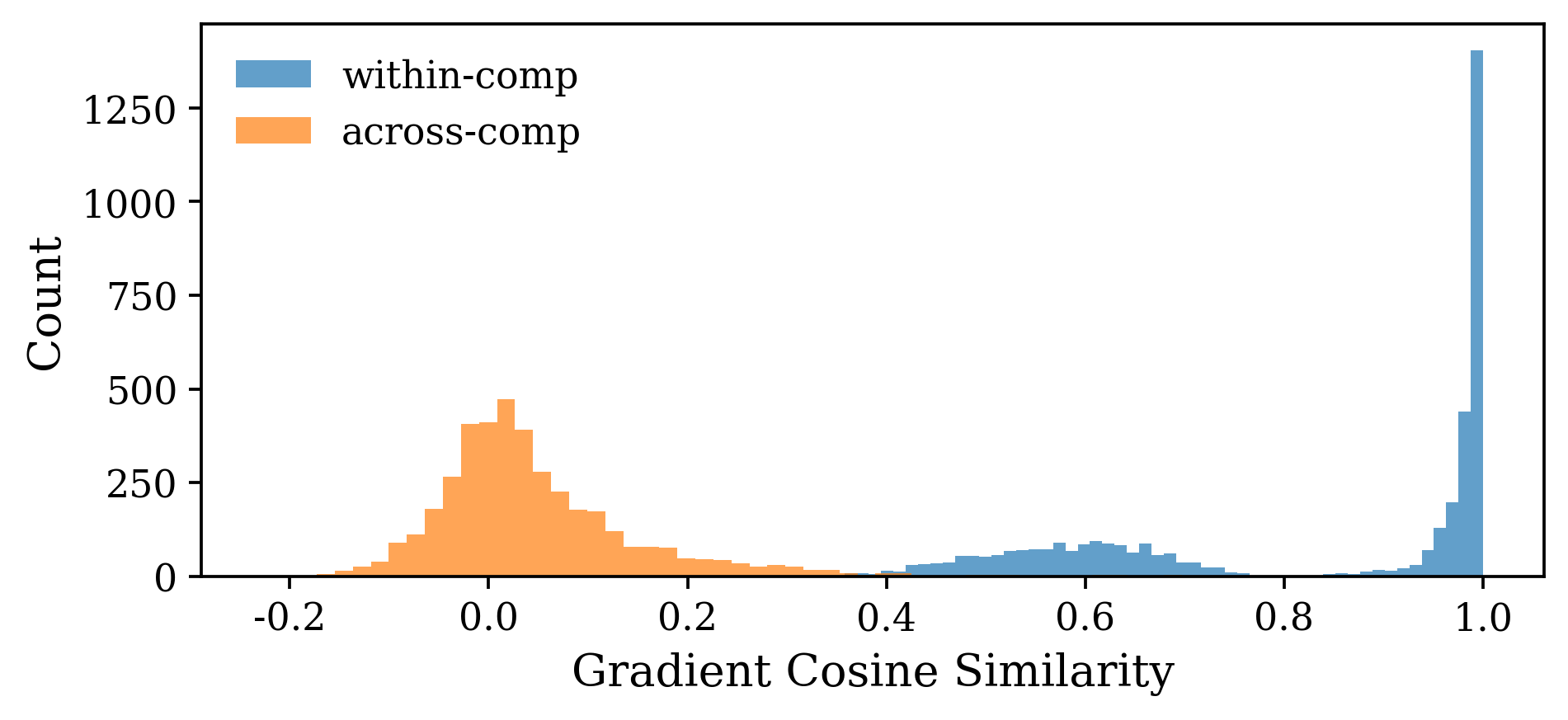}
    \caption{Layer 14}
  \end{subfigure}
  \begin{subfigure}[t]{0.335\linewidth}
    \centering
    \includegraphics[width=\linewidth]{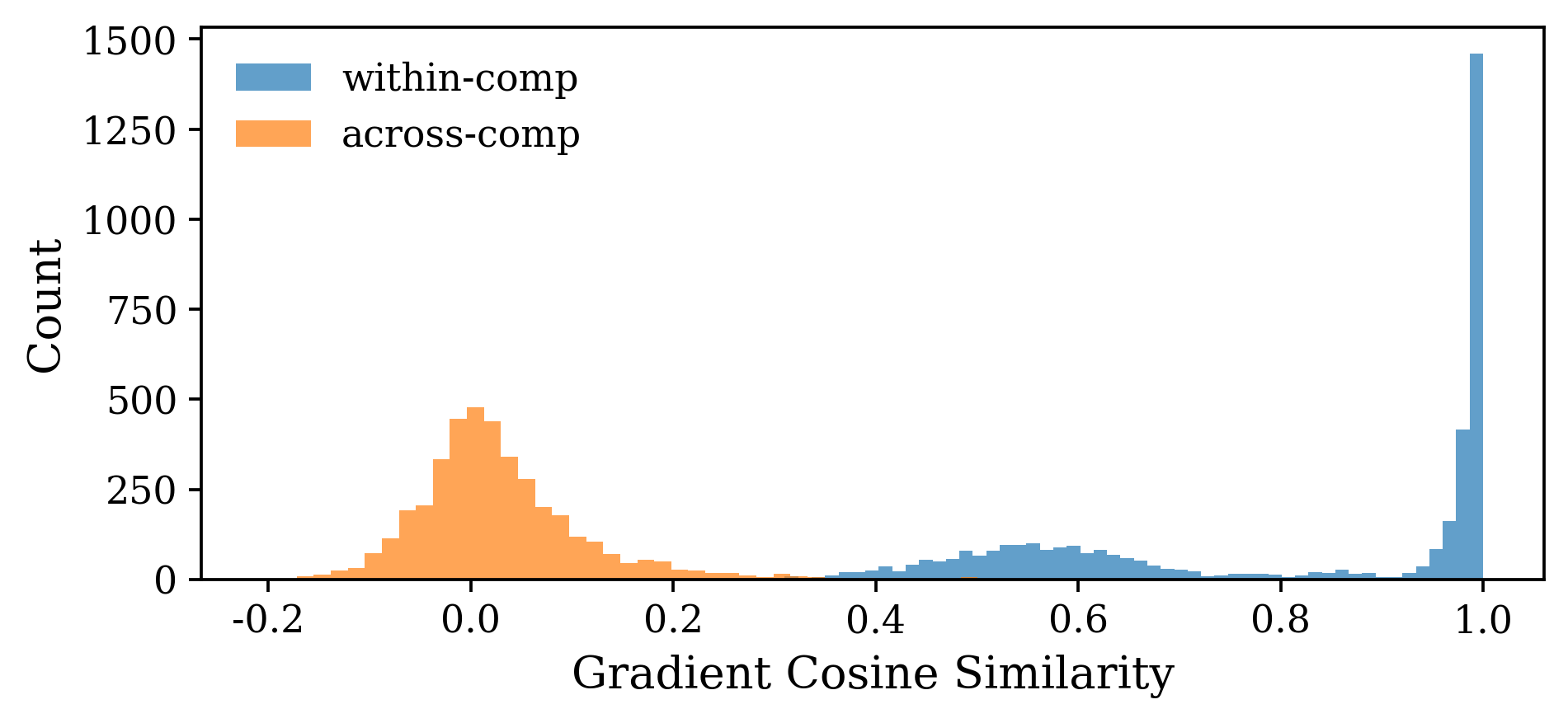}
    \caption{Layer 16}
  \end{subfigure}
  \begin{subfigure}[t]{0.335\linewidth}
    \centering
    \includegraphics[width=\linewidth]{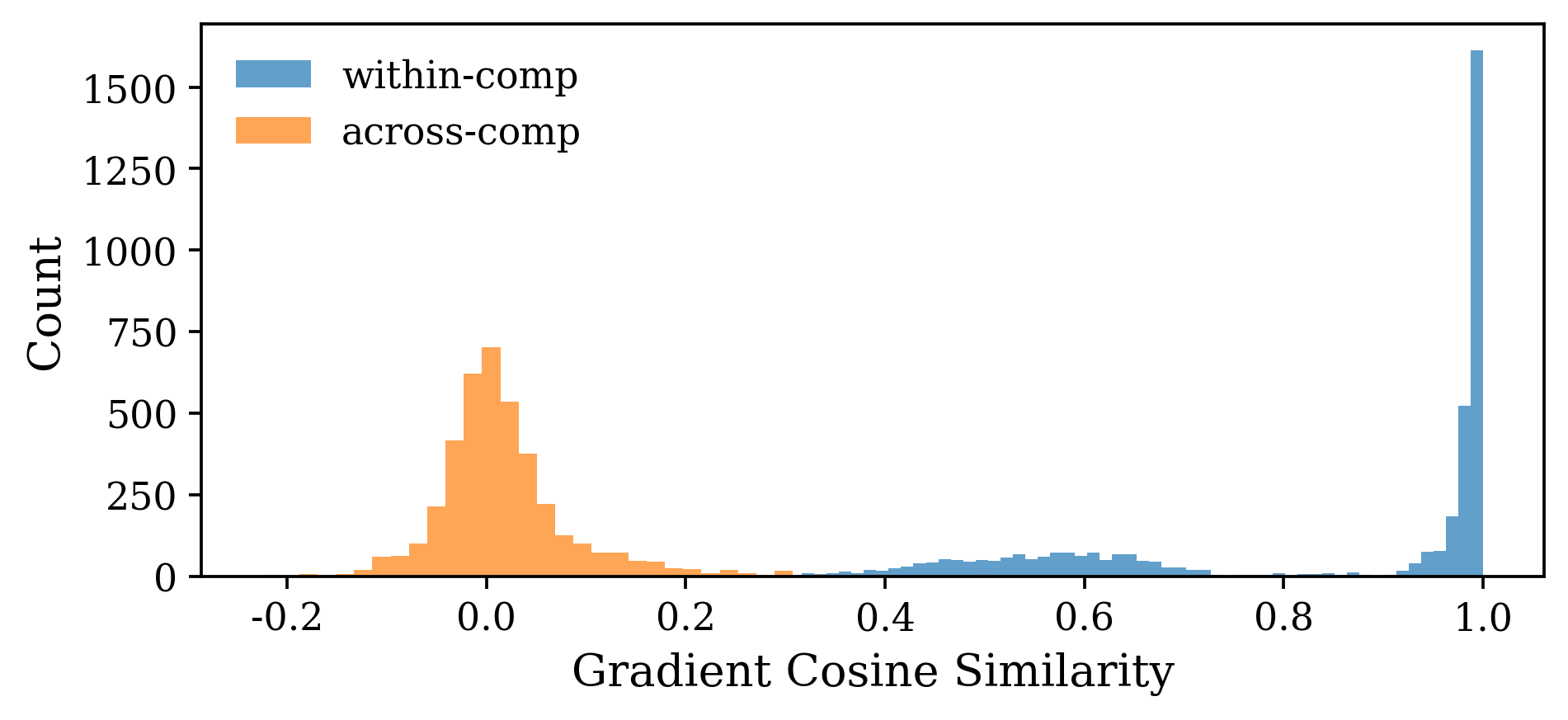}
    \caption{Layer 18}
  \end{subfigure}
  \begin{subfigure}[t]{0.335\linewidth}
    \centering
    \includegraphics[width=\linewidth]{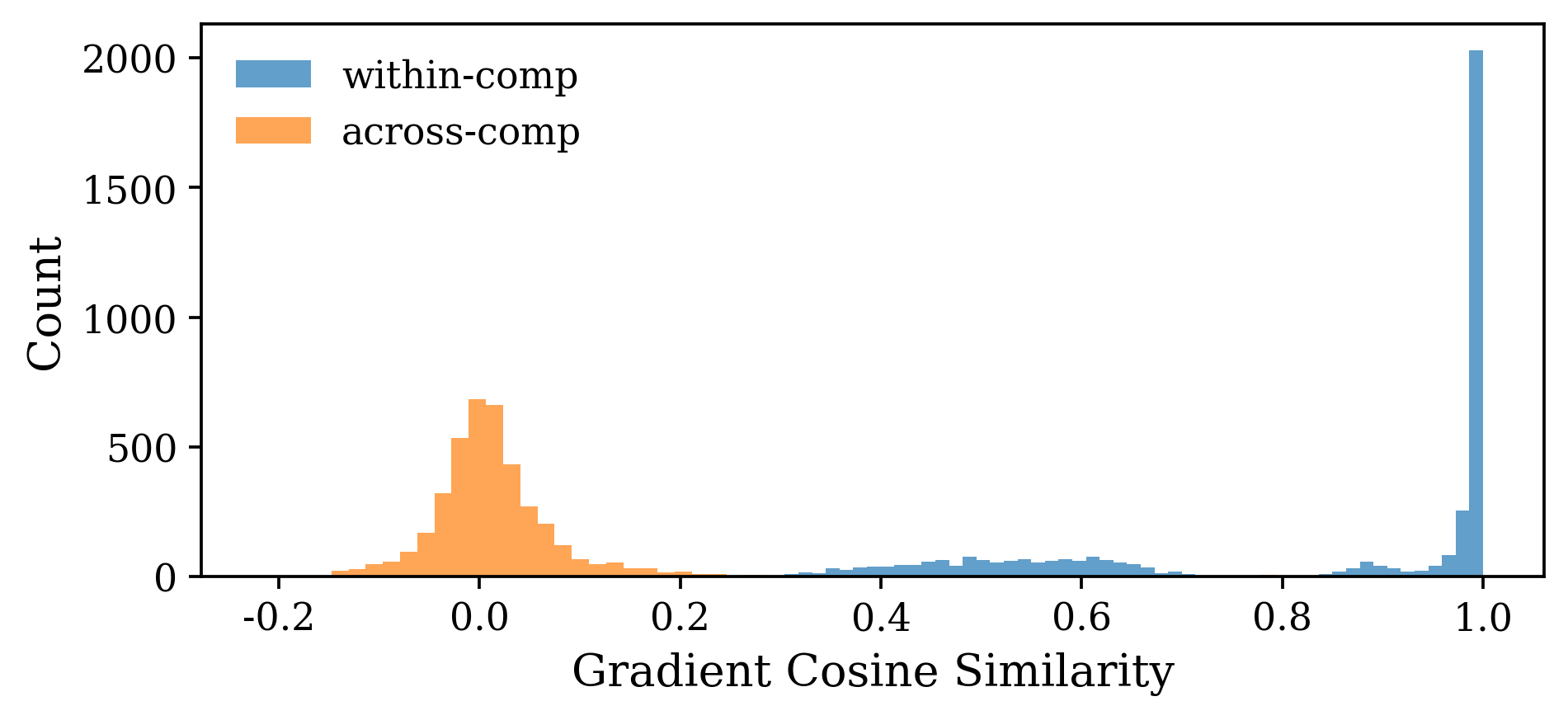}
    \caption{Layer 20}
  \end{subfigure}
  \begin{subfigure}[t]{0.335\linewidth}
    \centering
    \includegraphics[width=\linewidth]{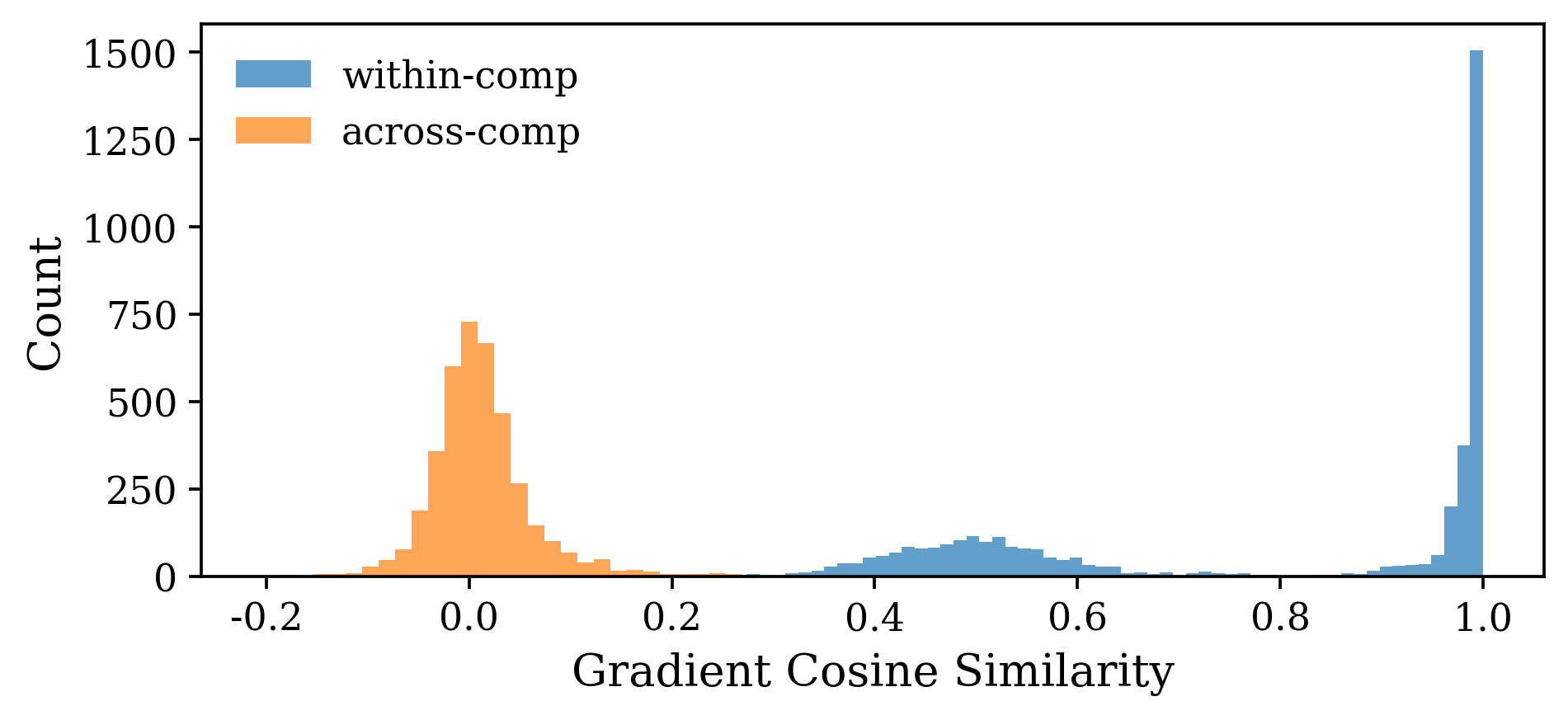}
    \caption{Layer 22}
  \end{subfigure}
  \begin{subfigure}[t]{0.335\linewidth}
    \centering
    \includegraphics[width=\linewidth]{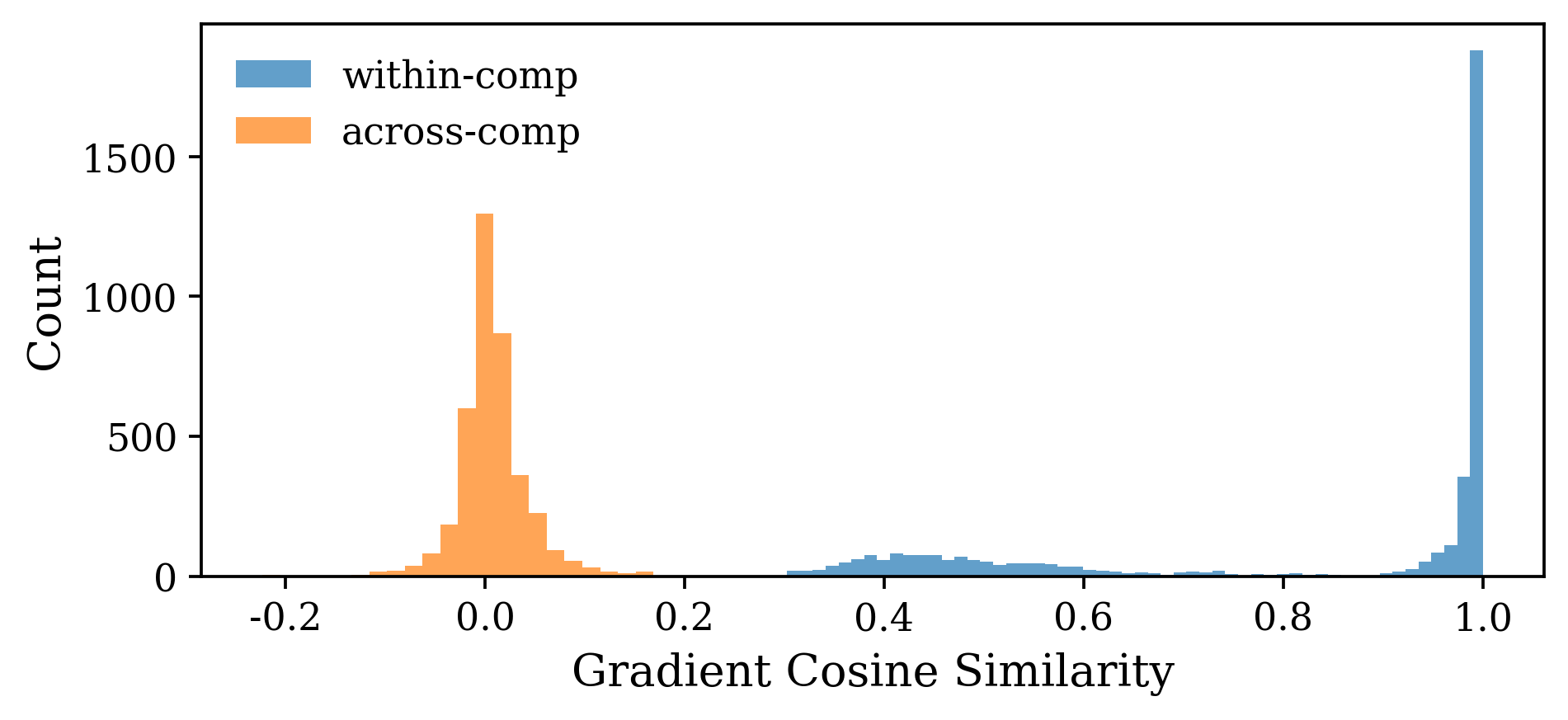}
    \caption{Layer 24}
  \end{subfigure}
  \begin{subfigure}[t]{0.335\linewidth}
    \centering
    \includegraphics[width=\linewidth]{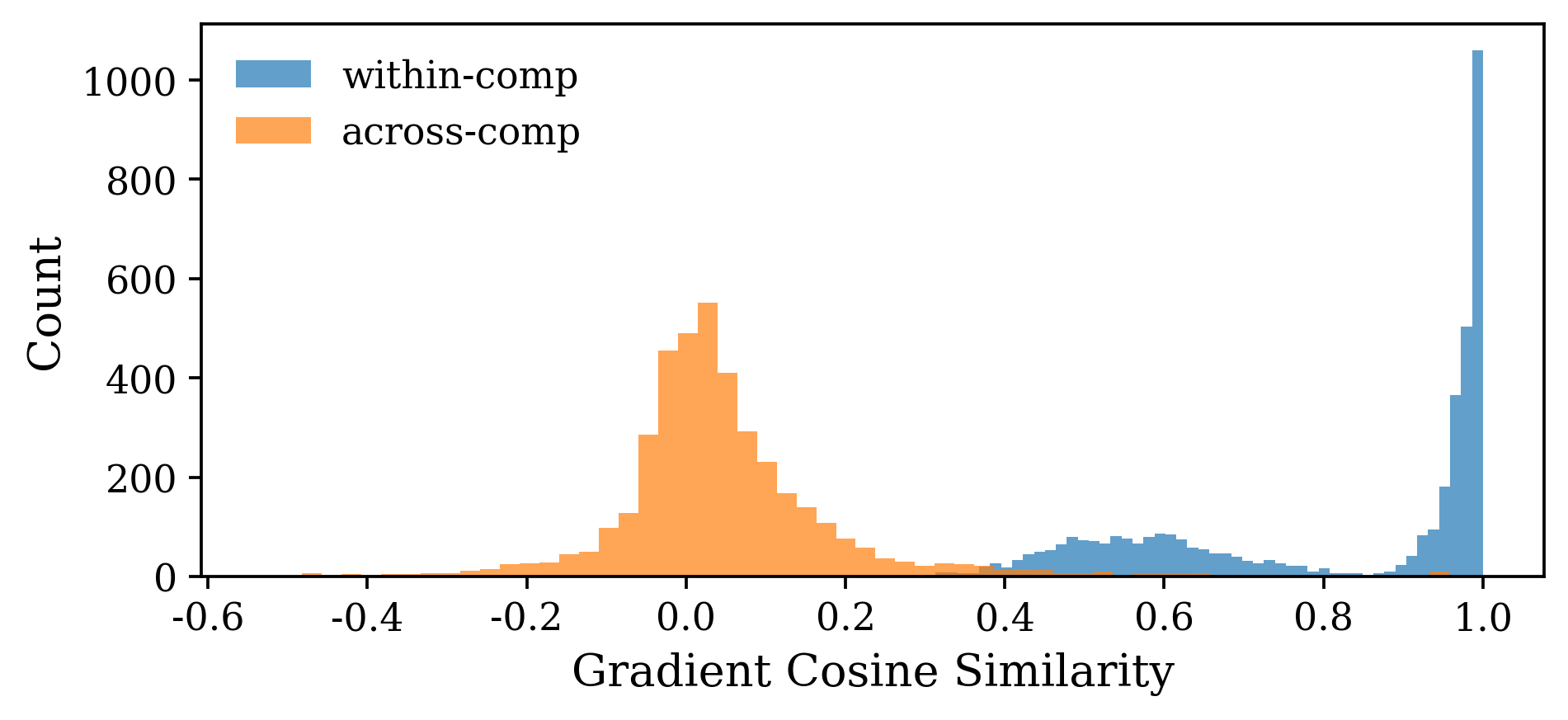}
    \caption{Layer 26}
  \end{subfigure}
  
  \caption{\textbf{Different feature compositions induce distinct gradient directions across layers.}}

  \label{fig:grad_all}
\end{figure}